\newcommand{\abf}{\ensuremath{\mathbf{a}}}
\newcommand{\Bbf}{\ensuremath{\mathbf{B}}}
\newcommand{\dbf}{\ensuremath{\mathbf{d}}}
\newcommand{\Ibf}{\ensuremath{\mathbf{I}}}
\newcommand{\xbf}{\ensuremath{\mathbf{x}}}
\newcommand{\Acal}{\ensuremath{\mathcal{A}}}
\newcommand{\Dcal}{\ensuremath{\mathcal{D}}}
\newcommand{\Hcal}{\ensuremath{\mathcal{H}}}
\newcommand{\Pcal}{\ensuremath{\mathcal{P}}}
\newcommand{\Qcal}{\ensuremath{\mathcal{Q}}}
\newcommand{\Scal}{\ensuremath{\mathcal{S}}}
\newcommand{\Tcal}{\ensuremath{\mathcal{T}}}
\newcommand{\Xcal}{\ensuremath{\mathcal{X}}}
\newcommand{\Ycal}{\ensuremath{\mathcal{Y}}}
\newcommand{\Rbb}{\ensuremath{\mathbb{R}}}
\newcommand{\RA}{\right\rangle}
\newcommand{\LA}{\left\langle}
\newcommand{\LB}{\left[}
\newcommand{\RB}{\right]}
\newcommand{\LC}{\left\{}
\newcommand{\RC}{\right\}}
\newcommand{\RN}{\right\|}
\newcommand{\LN}{\left\|}
\newcommand{\LP}{\left(}
\newcommand{\RP}{\right)}
\newcommand{\ie}{{\em i.e.\/}}
\newcommand{\eg}{{\em e.g.\/}}
\DeclareMathOperator*{\EE}{\mathbb{E}}
\DeclareMathOperator*{\PP}{\mathrm{Pr}}
\newcommand{\sign}{\operatorname{sign}}
\newcommand{\KL}{{\rm KL}}
\newcommand{\kl}{{\rm kl}}
\newcommand{\klmin}{\underline{\kl}}
\newcommand{\klmax}{\overline{\kl}}
\newcommand{\MV}{\text{MV}}
\newcommand{\MVQ}{\MV_{\!\Q}}
\newcommand{\CBound}{{C}}
\newcommand{\algor}[1]{{\footnotesize\sc #1}}
\newcommand{\algorLarge}[1]{{\Large\sc #1}}
\newcommand{\algoGibbs}{\algor{2r}}
\newcommand{\algoGibbsLarge}{\algorLarge{2r}}
\newcommand{\D}{\Dcal}
\renewcommand{\P}{\Pcal}
\newcommand{\Q}{\Qcal}
\newcommand{\R}{\Rbb}
\begin{document}
\title{Self-Bounding Majority Vote Learning Algorithms by the Direct Minimization of a Tight PAC-Bayesian C-Bound}
\toctitle{Self-Bounding Majority Vote Learning Algorithms by the Direct Minimization of a Tight PAC-Bayesian C-Bound}
\titlerunning{Majority Vote Learning by Direct Minimization of PAC-Bayesian C-Bound}

\author{Paul Viallard\inst{1}
\and Pascal Germain\inst{2}
\and Amaury Habrard\inst{1}
\and Emilie Morvant\inst{1}
}
\tocauthor{Paul~Viallard, Pascal~Germain, Amaury~Habrard, Emilie~Morvant}
\authorrunning{P. Viallard, P. Germain, A. Habrard, E. Morvant}

\institute{Univ Lyon, UJM-Saint-Etienne, CNRS, Institut d Optique Graduate School,\protect\\ Laboratoire Hubert Curien UMR 5516, F-42023, SAINT-ETIENNE, France
\email{firstname.name@univ-st-etienne.fr}\\
\and
D\'epartement d'informatique et de g\'enie logiciel, Universit\'e Laval,	Qu\'ebec, Canada\\
\email{pascal.germain@ift.ulaval.ca}}

\maketitle
\setcounter{footnote}{0}

\begin{abstract}
In the PAC-Bayesian literature, the C-Bound refers to an insightful relation between the risk of a majority vote classifier (under the zero-one loss) and the first two moments of its margin (\textit{i.e.}, the expected margin and the voters' diversity).
Until now, learning algorithms developed in this framework minimize the empirical version of the C-Bound, instead of explicit PAC-Bayesian generalization bounds. 
In this paper, by directly optimizing PAC-Bayesian guarantees on the C-Bound, we derive self-bounding majority vote learning algorithms.
Moreover, our algorithms based on gradient descent are scalable and lead to accurate predictors paired with non-vacuous guarantees.

\keywords{Majority Vote  \and PAC-Bayesian \and Self-Bounding Algorithm.}
\end{abstract}

\section{Introduction}

In machine learning, ensemble methods~\cite{Dietterich2000} aim to combine hypotheses to make predictive models more robust and accurate.
A weighted majority vote learning procedure is an ensemble method for classification where each voter/hypothesis is assigned a weight ({\it i.e.}, its influence in the final voting).
Among the famous majority vote methods, we can cite Boosting~\cite{FreundSchapire1996}, Bagging~\cite{Breiman1996}, or Random Forest~\cite{Breiman2001}.
Interestingly, most of the kernel-based classifiers, like Support Vector Machines~\cite{BoserGuyonVapnik1992,CortesVapnik1995}, can be seen as a majority vote of kernel functions.
Understanding when and why weighted majority votes perform better than a single hypothesis is challenging.
To study the generalization abilities of such majority votes, the PAC-Bayesian framework~\cite{TaylorWilliamson1997,McAllester1999} offers powerful tools to obtain Probably Approximately Correct (PAC) generalization bounds.
Motivated by the fact that PAC-Bayesian analyses can lead to tight bounds~(\eg, \cite{HernandezAmbroladzeTaylorSun2012}), developing algorithms to minimize such bounds is an important direction~({\it e.g.},~\cite{GermainLacasseLavioletteMarchand2009,GermainLacasseLavioletteMarchandRoy2015,DziugaiteRoy2017,MasegosaLorenzenIgelSeldin2020}).

We focus on a class of PAC-Bayesian algorithms minimizing an upper bound on the majority vote's risk  called the C-Bound\footnote{The C-Bound was introduced by Breiman in the context of Random Forest~\cite{Breiman2001}.} in the PAC-Bayesian literature~\cite{LacasseLavioletteMarchandGermainUsunier2006}.
This bound has the advantage of involving the majority vote's margin and its second statistical moment, {\it i.e.}, the diversity of the voters. 
Indeed, these elements are important when one learns a  combination~\cite{Dietterich2000,Kuncheva2014}: A good majority vote is made up of voters that are ``accurate enough'' and ``sufficiently diverse''.
Various algorithms have been proposed to minimize the C-Bound:  \algor{MinCq}~\cite{RoyLavioletteMarchand2011}, \mbox{\algor{P-MinCq}}~\cite{BelletHabrardMorvantSebban2014}, \algor{CqBoost}~\cite{RoyMarchandLaviolette2016}, or \algor{CB-Boost}~\cite{BauvinCapponiRoyLaviolette2020}.
Despite being empirically efficient, and justified by theoretical analyses based on the C-Bound, all these methods minimize \emph{only} the empirical C-Bound and not directly a PAC-Bayesian generalization bound on the C-Bound.
This can lead to vacuous generalization bound values and thus to poor risk certificates.

In this paper, we cover three different PAC-Bayesian viewpoints on generalization bounds for the C-Bound~\cite{McAllester2003,Seeger2002,LacasseLavioletteMarchandGermainUsunier2006}.
Starting from these three views, we derive three algorithms to optimize generalization bounds on the C-Bound. 
By doing so, we achieve \emph{self-bounding algorithms}~\cite{Freund1998}: the predictor returned by the learner comes with a statistically valid risk upper bound. 
Importantly, our algorithms rely on fast gradient descent procedures. 
As far as we know, this is the first work that proposes both efficient algorithms for C-Bound optimization and non-trivial risk bound values. 

The paper is organized as follows.
Section~\ref{section:notations} introduces the setting. 
Section~\ref{section:pac-bayesian-C-Bounds} recalls the PAC-Bayes bounds on which we build our results. 
Our self-bounding algorithms leading to non-vacuous PAC-Bayesian bounds are described in Section~\ref{section:contribution}.
We provide experiments in Section~\ref{section:expe}, and  conclude in Section~\ref{section:conclusion}.

\section{Majority Vote Learning}
\label{section:notations}
\subsection{Notations and Setting} 
We stand in the context of learning a weighted majority vote for binary classification. 
Let $\Xcal\!\subseteq\! \R^{d}$ be a $d$-dimensional input space, and $\Ycal{=}\{-1, +1\}$ be the label space.
We assume an unknown data distribution $\Dcal$ on $\Xcal{\times}\Ycal$, we denote by $\Dcal_{\Xcal}$ the marginal distribution on $\Xcal$.
A learning algorithm is provided with a learning sample  $\Scal{=}\{(\xbf_i, y_i)\}_{i=1}^{m}$  where each example $(\xbf_i,y_i)$ is drawn {\it i.i.d.} from $\Dcal$, we denote by $\Scal{\sim}\Dcal^m$ the random draw of such a sample.
Given $\Hcal$ a hypothesis set constituted by so-called {\it voters} \mbox{$h:\Xcal{\rightarrow}\Ycal$}, and  $\Scal$, the learner aims to find a weighted combination of the voters \mbox{from $\Hcal$}; the weights are modeled by a distribution on $\Hcal$.
To learn such a combination in the PAC-Bayesian framework, we assume a {\it prior} distribution $\P$ on $\Hcal$, and---after the observation of $\Scal$---we learn a {\it posterior} distribution $\Q$ on $\Hcal$.
More precisely, we aim to learn a well-performing classifier that is expressed as a \mbox{$\Q$-\textit{weighted majority vote}} $\MVQ$ defined as  
\begin{align*}\forall \xbf\in \Xcal,\quad  &\MVQ(\xbf) \ \triangleq\ \sign\LP\underset{h\sim\Q}{\EE}h(\xbf)\RP\ =\ \sign\LP\,\sum_{h\in\Hcal} \Q(h) h(\xbf)\RP.
\end{align*}
We thus want to learn $\MVQ$ that  commits as few errors as possible on unseen data from $\Dcal$,
{\it i.e.}, that leads to a low true risk $r^{\MV}_{\Dcal}\!(\Q)$ under the \mbox{{\small 0-1}-loss defined as}
\begin{align*}
r^{\MV}_{\Dcal}\!(\Q)
    & \triangleq \EE_{(\xbf, y)\sim\D} \Ibf\Big[   \MVQ(\xbf) \ne y\Big], \quad \mbox{where} \ \Ibf[a] = \left\{\begin{array}{ll}1 & \text{if the assertion $a$ is true,}\\ 0 & \text{otherwise.}\end{array}\right.
\end{align*}

\subsection{Gibbs Risk, Joint Error and C-Bound}
\label{sec:gibbs}
Since $\Dcal$ is unknown, a common way to try to minimize $r^{\MV}_{\Dcal}\!(\Q)$  
is the minimization of its empirical counterpart $r^{\MV}_{\Scal}\!(\Q) = \frac{1}{m}\sum_{i=1}^m \Ibf\left[\MVQ(\xbf_i) {\ne} y_i\right]$ computed on the learning sample $\Scal$ through the Empirical Risk Minimization principle.
However, learning the weights by the direct minimization of $r^{\MV}_{\Scal}\!(\Q)$ does not necessarily lead to a low true risk.
One solution consists then in looking for precise estimators or generalization bounds of the true risk $r^{\MV}_{\Dcal}\!(\Q)$  to minimize them.
In the PAC-Bayesian theory, a well-known estimator of the true risk $r^{\MV}_{\Dcal}\!(\Q)$ is the \textbf{Gibbs risk} defined as the \mbox{$\Q$-average} risk of the voters as
$$r_{\Dcal}(\Q)\ =\ \EE_{h\sim \Q} \EE_{(\xbf, y)\sim\D}  \Ibf\left[h(\xbf) \ne y\right].$$ 
Its empirical counterpart is defined as $r_{\Scal}(\Q){=}\frac{1}{m}\sum_{i=1}^{m}\EE_{h\sim \Q}\Ibf\left[h(\xbf_i) \ne y_i\right]$.
However, in ensemble methods where one wants to combine voters efficiently, the Gibbs risk appears to be an unfair estimator since it does not take into account the fact that a combination of voters has to compensate for the individual errors.
This is highlighted by the decomposition of  $r_{\Dcal}(\Q)$ in Equation~\eqref{eq:risk-err-disa} (due to Lacasse {\it et al.}~\cite{LacasseLavioletteMarchandGermainUsunier2006}) into  the expected \textbf{disagreement} and the expected \textbf{joint error}, respectively defined by 
\begin{align*}
       d_{\Dcal}(\Q) &= \EE_{h_1\sim\Q}\EE_{h_2\sim\Q} \EE_{\xbf\sim\D_\Xcal}\Ibf\big[h_1(\xbf) \ne h_2(\xbf)\big],\\
       \mbox{and}\quad  e_{\Dcal}(\Q) &= \EE_{h_1\sim\Q}\EE_{h_2\sim\Q}
    \EE_{(\xbf, y)\sim\Dcal}\! \Ibf\big[h_1(\xbf) \ne y\big]\Ibf\big[h_2(\xbf) \ne y\big].
\end{align*}
Indeed, an increase of the voter's diversity, captured by the disagreement $d_{\Dcal}(\Q)$, have a negative impact on the Gibbs risk, as
\begin{align}
    r_{\Dcal}(\Q) =  e_{\Dcal}(\Q) + \tfrac{1}{2}d_{\Dcal}(\Q).
        \label{eq:risk-err-disa}
\end{align}
Despite this unfavorable behavior, many PAC-Bayesian results deal only with the Gibbs risks, thanks to a straightforward upper bound of the majority vote's risk which consists in upper-bounding it by twice the \mbox{Gibbs risk~\cite{LangfordShawe2002}, \ie,}
\begin{align}
    \label{eq:2gibbs} 
   r^{\MV}_{\Dcal}\!(\Q) \leq 2\, r_{\Dcal}(\Q) \ =\  2e_{\Dcal}(\Q) + d_{\Dcal}(\Q).
\end{align}
This bound is tight only when the Gibbs risk is low (\eg, when voters with large weights perform  well individually~\cite{GermainLacasseLavioletteMarchand2009,LangfordShawe2002}).
Recently, Masegosa {\it et al.}~\cite{MasegosaLorenzenIgelSeldin2020} propose to deal directly with the joint error as 
\begin{align}
    r^{\MV}_{\Dcal}(\Q) \le 4e_{\Dcal}(\Q) \ =\  2r_{\Dcal}(\Q) + 2e_{\Dcal}(\Q) - d_{\Dcal}(\Q). 
    \label{eq:masegosa}
\end{align}
Equation~\eqref{eq:masegosa} is tighter than Equation~\eqref{eq:2gibbs} if $e_{\Dcal}(\Q){\leq}\frac12 d_{\Dcal}(\Q)\Leftrightarrow r_{\Dcal}(\Q) {\leq} d_{\Dcal}(\Q)$; 
This captures the fact that the voters need to be sufficiently diverse and commit errors on different points.
However, when the joint error $e_{\Dcal}(\Q)$ exceeds~$\frac14$, the bound exceeds $1$ and is uninformative.
Another bound---known as the C-Bound in the PAC-Bayes literature~\cite{LacasseLavioletteMarchandGermainUsunier2006}---has been introduced to capture this trade-off between the Gibbs risk $r_{\Dcal}(\Q)$ and the disagreement $d_{\Dcal}(\Q)$, and is recalled in the following theorem.
\begin{theorem}[C-Bound] For any distribution $\Dcal$ on $\Xcal{\times}\Ycal$, for \mbox{any voters set}~$\Hcal$, for any distribution $\Q$ on~$\Hcal$, if $r_{\Dcal}(\Q){<}\tfrac{1}{2}{\iff}2e_{\Dcal}(\Q){+}d_{\Dcal}(\Q){<}1$, we have
\begin{align*}
    r^{\MV}_{\Dcal}(\Q)\  \le \ &1-\frac{\LP1-2r_{\Dcal}(\Q)\RP^2}{1-2d_{\Dcal}(\Q)}\quad \triangleq\quad  \CBound_{\Dcal}({\Q})\\
     &= 1-\frac{\big(1-\LB2e_{\Dcal}(\Q)+d_{\Dcal}(\Q)\RB\big)^2}{1-2d_{\Dcal}(\Q)}.
\end{align*}
The \textbf{empirical C-Bound} is denoted by $\CBound_{\Scal}({\Q})$ where the empirical disagreement is defined by $d_{\Scal}(\Q){=}\frac{1}{m}\sum_{i=1}^{m}\EE_{h_1\sim\Q}\EE_{h_2\sim\Q}\Ibf[h_1(\xbf_i){\ne}h_2(\xbf_i)],$ and the empirical joint error is defined by 
$e_{\Scal}(\Q){=} \frac{1}{m}\sum_{i=1}^m \EE_{h_1\sim\Q} \EE_{h_2\sim\Q} \Ibf[h_1(\xbf_i) {\ne} y_i]\Ibf[h_2(\xbf_i) {\ne} y_i]$.
\label{theorem:C-Bound}
\end{theorem}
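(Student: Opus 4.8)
The plan is to recognize the C-Bound as an application of the one-sided Chebyshev (Cantelli) inequality to the \emph{margin} of the majority vote, whose first two moments are precisely governed by the Gibbs risk and the disagreement. Concretely, for an example $(\xbf,y)$ I would introduce the margin random variable
\begin{align*}
 M_\Q(\xbf,y)\ \triangleq\ y\,\EE_{h\sim\Q} h(\xbf),
\end{align*}
which lives in $[-1,1]$. Since $\MV_\Q(\xbf)=\sign(\EE_{h\sim\Q}h(\xbf))$, the vote errs on $(\xbf,y)$ only when this margin is non-positive, so that $r^{\MV}_\Dcal(\Q)\le \PP_{(\xbf,y)\sim\D}[M_\Q(\xbf,y)\le 0]$, irrespective of the tie-breaking convention.

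The first step is to compute the two moments of $M_\Q$ over $(\xbf,y)\sim\D$. Using the identity $y\,h(\xbf)=1-2\,\Ibf[h(\xbf)\ne y]$, linearity of expectation immediately gives the first moment $\EE_{(\xbf,y)\sim\D}[M_\Q(\xbf,y)]=1-2\,r_\Dcal(\Q)$. For the second moment I would exploit $y^2=1$ to drop the label, expand the square as a double expectation over two \emph{independent} draws $h_1,h_2\sim\Q$, and apply $h_1(\xbf)h_2(\xbf)=1-2\,\Ibf[h_1(\xbf)\ne h_2(\xbf)]$; this yields $\EE_{(\xbf,y)\sim\D}[M_\Q(\xbf,y)^2]=1-2\,d_\Dcal(\Q)$. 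These two identities are the crux: they translate the statistical moments of the margin into exactly the quantities appearing in the bound.

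Next I would invoke Cantelli's inequality: for a random variable with mean $\mu$ and variance $\sigma^2$ and any $\lambda>0$, one has $\PP[\,\text{variable}\le \mu-\lambda\,]\le \sigma^2/(\sigma^2+\lambda^2)$. The hypothesis $r_\Dcal(\Q)<\tfrac12$ guarantees $\mu=1-2\,r_\Dcal(\Q)>0$, so I may choose $\lambda=\mu$, turning the left-hand side into exactly $\PP[M_\Q\le 0]$. Writing $\sigma^2=\mu_2-\mu_1^2$ with $\mu_1=1-2\,r_\Dcal(\Q)$ and $\mu_2=1-2\,d_\Dcal(\Q)$, the denominator collapses to $\sigma^2+\mu_1^2=\mu_2$, and the bound becomes $1-\mu_1^2/\mu_2$, which is precisely $\CBound_\Dcal(\Q)$ after substitution. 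A sanity remark closes the argument: $\mu_2\ge \mu_1^2>0$ since the variance is non-negative, so the denominator $1-2\,d_\Dcal(\Q)$ is automatically positive and the expression is well defined; the equivalent joint-error form $1-(1-[2e_\Dcal(\Q)+d_\Dcal(\Q)])^2/(1-2\,d_\Dcal(\Q))$ then follows from the decomposition $r_\Dcal(\Q)=e_\Dcal(\Q)+\tfrac12 d_\Dcal(\Q)$ in~\eqref{eq:risk-err-disa}.

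I expect the only genuine obstacle to be the clean derivation of the second moment: one must use two \emph{independent} copies $h_1,h_2$ when expanding $(\EE_{h\sim\Q}h(\xbf))^2$, and cancel the label via $y^2=1$ before rewriting products as disagreement indicators — otherwise the disagreement $d_\Dcal(\Q)$ does not emerge. Everything else, namely applying Cantelli's inequality with $\lambda=\mu$ and simplifying the resulting fraction, is routine algebra.
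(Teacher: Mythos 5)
Your proposal is correct and complete: the margin moment identities $\EE[M_\Q]=1-2r_\Dcal(\Q)$ and $\EE[M_\Q^2]=1-2d_\Dcal(\Q)$, followed by Cantelli's inequality with $\lambda=\mu$, is exactly the canonical derivation of the C-Bound from the cited literature (Lacasse et al.; Germain et al.), and the paper itself only recalls this theorem without reproving it. Your closing remark that $1-2d_\Dcal(\Q)\ge(1-2r_\Dcal(\Q))^2>0$ follows from non-negativity of the variance is the right way to justify that the denominator is positive, so nothing is missing.
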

As Equation~\eqref{eq:masegosa}, the C-Bound is tighter than Equation~\eqref{eq:2gibbs} when $r_{\Dcal}(\Q) \leq d_{\Dcal}(\Q)$ and looks for a good trade-off between individual risks and disagreement.
The main interest of the C-bound compared to Equation~\eqref{eq:masegosa} is that when $e_{\Dcal}(\Q)$ is close to $\tfrac{1}{4}$, the C-Bound can be close to $0$ depending on the value of the disagreement $d_{\Dcal}(\Q)$: the C-bound is then more precise.
Moreover, it is important to notice that the C-Bound is always tighter than Equation~\eqref{eq:masegosa} and tighter than Equation~\eqref{eq:2gibbs} when $r_{\Dcal}(\Q)\le d_{\Dcal}(\Q)$.
We summarize the relationships between Equations~\eqref{eq:2gibbs},~\eqref{eq:masegosa} and $\CBound_{\Dcal}({\Q})$ in the next theorem.
\begin{theorem}[From Germain {et al.}~\cite{RoyMarchandLaviolette2016} and Masegosa {et al.}~\cite{MasegosaLorenzenIgelSeldin2020}] For any distribution $\Dcal$ on $\Xcal\times\Ycal$, for any voters set $\Hcal$, for any distribution $\Q$ on $\Hcal$, if $r_{\Dcal}(\Q) < \tfrac{1}{2}$, we have
\begin{center}
\begin{minipage}{8.2cm}
\begin{enumerate}[\it (i)]
    \item $C_{\Dcal}(\Q) \le 4e_{\Dcal}(\Q) \le 2r_{\Dcal}(\Q)$,\quad if $r_{\Dcal}(\Q)\le d_{\Dcal}(\Q)$, 
    \item $2r_{\Dcal}(\Q) \le C_{\Dcal}(\Q) \le 4e_{\Dcal}(\Q)$,\quad otherwise.
\end{enumerate}
\end{minipage}
\end{center}
\end{theorem}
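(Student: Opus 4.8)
The plan is to express every quantity through the two summary statistics $r := r_{\Dcal}(\Q)$ and $d := d_{\Dcal}(\Q)$. Equation~\eqref{eq:risk-err-disa} gives $e_{\Dcal}(\Q) = r - \tfrac12 d$, hence $4e_{\Dcal}(\Q) = 4r - 2d$, while by definition $\CBound_{\Dcal}(\Q) = 1 - \tfrac{(1-2r)^2}{1-2d}$. Throughout I use the two sign facts $1 - 2r > 0$ (from the hypothesis $r < \tfrac12$) and $1 - 2d > 0$. The latter is exactly the condition making the C-Bound well defined, and it follows from $r < \tfrac12$: pointwise the disagreement equals $2\,r(\xbf)\bigl(1 - r(\xbf)\bigr) \le \tfrac12$, and attaining $d = \tfrac12$ would force $r(\xbf) = \tfrac12$ almost everywhere, i.e. $r = \tfrac12$, which is excluded.

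First I would dispatch the two comparisons governed by the sign of $r - d$. For the Masegosa bound versus twice the Gibbs risk, $4e_{\Dcal}(\Q) - 2r = (4r - 2d) - 2r = 2(r-d)$, so $4e_{\Dcal}(\Q) \le 2r$ iff $r \le d$. For the C-Bound versus twice the Gibbs risk, I would factor
\[
\CBound_{\Dcal}(\Q) - 2r = (1-2r) - \frac{(1-2r)^2}{1-2d} = (1-2r)\,\frac{2(r-d)}{1-2d},
\]
whose sign is that of $r - d$ because $1-2r>0$ and $1-2d>0$; thus $\CBound_{\Dcal}(\Q) \le 2r$ iff $r \le d$. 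These two observations already fix the ordering of $2r$ relative to each of the other bounds in both regimes.

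The step I expect to be the crux is the inequality $\CBound_{\Dcal}(\Q) \le 4e_{\Dcal}(\Q)$, which, unlike the previous two, must hold in \emph{both} regimes regardless of the sign of $r-d$. I would place everything over the common denominator $1-2d$ and check that the numerator is a perfect square:
\[
4e_{\Dcal}(\Q) - \CBound_{\Dcal}(\Q) = \frac{(4r-2d-1)(1-2d) + (1-2r)^2}{1-2d} = \frac{4(r-d)^2}{1-2d} \ge 0.
\]
The only genuine computation is verifying that the numerator collapses to $4(r-d)^2$; everything else is sign bookkeeping resting on $1-2d>0$. Combining the three comparisons gives case~(i), $r \le d$: $\CBound_{\Dcal}(\Q) \le 4e_{\Dcal}(\Q) \le 2r$, and case~(ii), $r > d$: $2r \le \CBound_{\Dcal}(\Q) \le 4e_{\Dcal}(\Q)$, as claimed.
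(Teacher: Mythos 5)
Your proof is correct. The paper itself states this theorem without proof, importing it from the cited references, so there is no in-paper argument to compare against; your self-contained derivation is exactly the natural one. All three computations check out: $4e_{\Dcal}(\Q)-2r = 2(r-d)$, the factorization $\CBound_{\Dcal}(\Q)-2r = (1-2r)\tfrac{2(r-d)}{1-2d}$, and the key identity $(4r-2d-1)(1-2d)+(1-2r)^2 = 4(r-d)^2$, which gives $\CBound_{\Dcal}(\Q)\le 4e_{\Dcal}(\Q)$ unconditionally. Your justification that $r<\tfrac12$ forces $1-2d>0$ (via the pointwise identity $d(\xbf,y)=2r(\xbf,y)(1-r(\xbf,y))\le\tfrac12$ with equality only when $r(\xbf,y)=\tfrac12$) is a detail the paper glosses over but is needed for the sign bookkeeping, and you handle it correctly.
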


In this paper, we focus on the minimization of PAC-Bayesian generalization bounds on the C-Bound to get a low-risk majority vote.
In Section~\ref{section:pac-bayesian-C-Bounds}, we recall such PAC-Bayesian bounds that have been introduced in the literature.

\subsection{Related Works} 
Previous algorithms have been developed to minimize the \emph{empirical} \mbox{C-Bound} $\CBound_{\Scal}({\Q})$.
Roy~{\it et al.}~\cite{RoyLavioletteMarchand2011} first proposed \algor{MinCq} where this minimization is expressed as a quadratic problem.
\algor{MinCq} considers a specific voters' set to regularize the minimization process.
One  drawback of \algor{MinCq} is that the optimization problem is not scalable to large datasets.
Lately, Bauvin~{\it et al.}~\cite{BauvinCapponiRoyLaviolette2020} proposed \mbox{\algor{CB-Boost}} that minimizes $\CBound_{\Scal}({\Q})$ in a greedy procedure with the advantage to be more scalable while obtaining sparser majority vote. 
However, since both \algor{MinCq} and \algor{CB-Boost} minimize the empirical  $\CBound_{\Scal}({\Q})$, the PAC-Bayesian generalization bound associated with their learned majority vote predictors can be vacuous.
Note that \algor{CB-Boost} has been proposed to improve another algorithm called  \mbox{\algor{CqBoost}}~\cite{RoyMarchandLaviolette2016}.

\noindent When it comes to deriving a learning algorithm that directly minimizes a PAC-Bayesian bound, it is mentioned in the literature that optimizing a PAC-Bayesian bound on the C-bound is not trivial~\cite{MasegosaLorenzenIgelSeldin2020,LorenzenIgelSeldin2019}. 
This underlines the need of other majority vote learning algorithms based on the C-Bound, which motivates our contributions of Section~\ref{section:contribution}.

\section{PAC-Bayesian C-Bounds}
\label{section:pac-bayesian-C-Bounds}

We recall now three PAC-Bayesian generalization bounds on the \mbox{C-Bound} referred hereafter as  the {\bf PAC-Bayesian C-Bounds}. 
Considering these three approaches has the interest to offer a large coverage of the PAC-Bayesian \mbox{C-bound} literature.
Our contribution, described in Section~\ref{section:contribution}, consists  in deriving a self-bounding algorithm for each of these PAC-Bayesian C-Bounds.
This shows that the PAC-Bayesian C-Bound offers various ways to learn majority votes that might have been overlooked until now.

\subsection{An Intuitive Bound---McAllester's View}

We recall the most intuitive and interpretable PAC-Bayesian C-Bound~\cite{RoyMarchandLaviolette2016}.
It consists in upper-bounding separately the Gibbs risk $r_{\Dcal}(\Q)$ and the disagreement $d_{\Dcal}(\Q)$ with the usual PAC-Bayesian bound of McAllester~\cite{McAllester2003} that bounds the deviation between true and empirical values with the Euclidean distance. 

\begin{theorem}[PAC-Bayesian C-Bound of Roy {et al.}~\cite{RoyMarchandLaviolette2016}] 
\label{th:C-Bound-mcallester}
For any distribution $\D$ on $\Xcal{\times}\Ycal$, for any prior distribution $\P$ on $\Hcal$, for any $\delta{>}0$, we have
\allowdisplaybreaks[4]
\begin{align}
 &\PP_{\Scal\sim\Dcal^{m}}\Bigg( \forall \Q \mbox{ on }\Hcal,\   \CBound_{\D}({\Q}) \le \underbrace{1{-}\frac{\LP 1-2\min\LB\frac{1}{2},r_{\Scal}(\Q){+}\sqrt{\tfrac{1}{2}\psi_{r}(\Q)}\RB\RP^2}{1-2\max\LB 0, d_{\Scal}(\Q){-}\sqrt{\tfrac{1}{2}\psi_{d}(\Q)}\RB}}_{\displaystyle \CBound^{\tt M}_{\Scal}(\Q)} \Bigg) \geq 1{-}2\delta,\nonumber\\[-6mm]
 \label{eq:C-Bound-mcallester}
 \end{align}
 with $\psi_{r}(\Q) = \tfrac1m\!\left[ \KL(\Q\|\P){+}\ln\!\tfrac{2\sqrt{m}}{\delta}\right],\  \text{and}\ \psi_{d}(\Q) = \tfrac1m\!\left[ 2\,\KL(\Q\|\P){+}\ln\!\tfrac{2\sqrt{m}}{\delta}\right]$, and
     $\KL(\Q\|\P)=\EE_{h\sim\Q}\ln\frac{\Q(h)}{\P(h)}$ is the KL-divergence between $\Q$ and $\P$.
\label{theorem:C-Bound-mcallester}
\end{theorem}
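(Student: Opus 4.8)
The plan is to control the true Gibbs risk $r_{\D}(\Q)$ from above and the true disagreement $d_{\D}(\Q)$ from below by their empirical counterparts, via two separate applications of McAllester's PAC-Bayesian bound~\cite{McAllester2003}, and then to inject these two deviation inequalities into the C-Bound of Theorem~\ref{theorem:C-Bound} through a monotonicity argument. The union of the two confidence events will account for the $1{-}2\delta$ probability.

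First I would apply McAllester's bound, in its Euclidean form, to the Gibbs loss $\Ibf[h(\xbf){\ne}y]$ seen as a $[0,1]$-valued loss with empirical mean $r_{\Scal}(\Q)$ and true mean $r_{\D}(\Q)$: with probability at least $1{-}\delta$ over $\Scal{\sim}\D^m$, simultaneously for all posteriors $\Q$, $|r_{\D}(\Q){-}r_{\Scal}(\Q)|\le\sqrt{\tfrac{1}{2m}[\KL(\Q\|\P){+}\ln\tfrac{2\sqrt m}{\delta}]}=\sqrt{\tfrac12\psi_{r}(\Q)}$ (this is the $\kl$-form bound followed by Pinsker's inequality, which also explains the $2\sqrt m$ factor). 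In particular $r_{\D}(\Q)\le r_{\Scal}(\Q){+}\sqrt{\tfrac12\psi_{r}(\Q)}$.

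Next I would handle the disagreement. The key observation is that $d_{\D}(\Q)$ is the true mean of the pairwise loss $\Ibf[h_1(\xbf){\ne}h_2(\xbf)]$ under the product posterior $\Q{\otimes}\Q$ on the paired voter set $\Hcal{\times}\Hcal$, with product prior $\P{\otimes}\P$, and its empirical mean is $d_{\Scal}(\Q)$. Since $\KL(\Q{\otimes}\Q\,\|\,\P{\otimes}\P){=}2\,\KL(\Q\|\P)$, the very same McAllester bound applied on $\Hcal{\times}\Hcal$ gives, with probability at least $1{-}\delta$ and for all $\Q$, $|d_{\D}(\Q){-}d_{\Scal}(\Q)|\le\sqrt{\tfrac{1}{2m}[2\KL(\Q\|\P){+}\ln\tfrac{2\sqrt m}{\delta}]}=\sqrt{\tfrac12\psi_{d}(\Q)}$, whence $d_{\D}(\Q)\ge d_{\Scal}(\Q){-}\sqrt{\tfrac12\psi_{d}(\Q)}$; the factor $2$ in front of $\KL$ is exactly what distinguishes $\psi_{d}$ from $\psi_{r}$. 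A union bound over the two confidence events shows that both the upper bound on $r_{\D}(\Q)$ and the lower bound on $d_{\D}(\Q)$ hold simultaneously, for all $\Q$, with probability at least $1{-}2\delta$.

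It then remains to substitute into $\CBound_{\D}(\Q){=}1{-}\tfrac{(1-2r_{\D}(\Q))^2}{1-2d_{\D}(\Q)}$, and this monotone substitution is where the real care is needed and constitutes the main obstacle: the ratio must be manipulated with its signs under control. For the numerator I would split into two cases. If $r_{\Scal}(\Q){+}\sqrt{\tfrac12\psi_{r}(\Q)}\ge\tfrac12$, the clipped value equals $\tfrac12$, the numerator of $\CBound^{\tt M}_{\Scal}(\Q)$ vanishes, and $\CBound^{\tt M}_{\Scal}(\Q){=}1\ge\CBound_{\D}(\Q)$ trivially; otherwise $r_{\D}(\Q)\le r_{\Scal}(\Q){+}\sqrt{\tfrac12\psi_{r}(\Q)}{<}\tfrac12$, so $1{-}2r_{\D}(\Q)\ge 1{-}2\min[\tfrac12,\,r_{\Scal}(\Q){+}\sqrt{\tfrac12\psi_{r}(\Q)}]\ge0$ and squaring two non-negative numbers preserves the order, lower-bounding $(1{-}2r_{\D}(\Q))^2$. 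Symmetrically, combining $d_{\D}(\Q)\ge0$ with the lower bound gives $d_{\D}(\Q)\ge\max[0,\,d_{\Scal}(\Q){-}\sqrt{\tfrac12\psi_{d}(\Q)}]$, so $0<1{-}2d_{\D}(\Q)\le 1{-}2\max[0,\cdots]$, the denominator being strictly positive because the confidence term $\sqrt{\tfrac12\psi_{d}(\Q)}$ is strictly positive and forces $\max[0,\,d_{\Scal}(\Q){-}\sqrt{\tfrac12\psi_{d}(\Q)}]<\tfrac12$. Since the subtracted ratio then has a non-negative numerator and a positive denominator, replacing the numerator by a smaller non-negative quantity and the denominator by a larger positive one can only increase the ratio, hence decrease $1{-}(\cdot)$; this is exactly $\CBound_{\D}(\Q)\le\CBound^{\tt M}_{\Scal}(\Q)$. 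As every inequality above is uniform in $\Q$, the claim holds for all posteriors $\Q$ on the same event of probability at least $1{-}2\delta$, which completes the proof.
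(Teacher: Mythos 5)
Your proof is correct and follows essentially the same route as the source the paper cites (Roy et al.~/ Germain et al.): two applications of the McAllester--Pinsker deviation bound, one for the Gibbs risk and one for the disagreement over $\Hcal\times\Hcal$ with $\KL(\Q\otimes\Q\|\P\otimes\P)=2\,\KL(\Q\|\P)$, a union bound yielding $1-2\delta$, and a sign-controlled monotone substitution into $\CBound_{\D}(\Q)$. The clipping cases and the positivity of the denominator (using $d_{\Scal}(\Q)\le\tfrac12$) are handled correctly, so nothing is missing.
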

While there is no algorithm that directly minimizes Equation~\eqref{eq:C-Bound-mcallester}, this kind of interpretable bound can be seen as a justification of the optimization of $r_{\Scal}(\Q)$ and $d_{\Scal}(\Q)$ in the empirical C-Bound such as for \algor{MinCq}~\cite{RoyLavioletteMarchand2011} or \mbox{\algor{CB-Boost}}~\cite{BauvinCapponiRoyLaviolette2020}.
In Section~\ref{section:contribution-mcallester}, we derive a first algorithm to directly minimize it.

However, this PAC-Bayesian C-Bound can have a severe disadvantage with a small $m$ and a Gibbs risk close \mbox{to $\tfrac{1}{2}$}: even for a $\KL(\Q\|\P)$ close to $0$ and a low empirical \mbox{C-Bound}, the value of the PAC-Bayesian C-Bound will be close to $1$.
To overcome this drawback, one solution is to follow another PAC-Bayesian point of view, the one proposed by Seeger~\cite{Seeger2002} that compares the true and empirical values through  \mbox{$\kl(a\|b) {=} a\log\LB\tfrac{a}{b}\RB {+} (1{-}a)\log\big[\tfrac{1{-}a}{1{-}b}\big]$}, knowing that \mbox{$|a{-}b| \le \sqrt{\frac12 \kl(a\|b)}$} (Pinsker's inequality).

In the next two subsections, we recall such bounds. 
The first one in Theorem~\ref{theorem:C-Bound-seeger} involves the risk and the disagreement, while the second one in Theorem~\ref{theorem:C-Bound-e-d} simultaneously bounds the joint error and the disagreement.

\subsection{A Tighter Bound---Seeger's view}
\label{section:C-Bound-seeger}

The PAC-Bayesian generalization bounds based on the Seeger's approach~\cite{Seeger2002} are known to produce tighter bounds \cite{GermainLacasseLavioletteMarchandRoy2015}.
As for Theorem~\ref{theorem:C-Bound-mcallester}, the result below bounds independently the Gibbs risk $r_{\Dcal}(\Q)$ and the disagreement $d_{\Dcal}(\Q)$.
\begin{theorem}[\scriptsize PAC-Bayesian C-Bound (PAC-Bound 1) of Germain et al.~\cite{GermainLacasseLavioletteMarchandRoy2015}] 
\label{theorem:C-Bound-seeger}
Under the same assumptions and notations as Theorem~\ref{theorem:C-Bound-mcallester}, we have 
\begin{align}
\label{eq:C-Bound-Seeger-original-bound}
&\PP_{\Scal{\sim}\Dcal^m}\Bigg(\forall \Q \mbox{ on } \Hcal,\  \CBound_{\D}({\Q}) \le \underbrace{1{-}\frac{\big(1{-}2\min\LB\frac{1}{2},  \klmax \LP r_{\Scal}(\Q) \;\middle|\; \psi_{r}(\Q)\RP\RB\big)^2}{1{-}2\max\LB 0,
\klmin 
\LP d_{\Scal}(\Q) \;\middle|\; \psi_{d}(\Q)\RP\RB}}_{\displaystyle\CBound^{\tt S}_{\Scal}(\Q)}\Bigg) \geq 1{-}2\delta,\nonumber\\[-6mm]
\end{align}
with $\klmax (q | \psi){=}\max\{ p\!\in\!(0,\! 1) | \kl(q\|p)\!\!\le\!\!\psi\}$, and $\klmin (q | \psi){=}\min\{ p\!\in\!(0,\! 1) | \kl(q\|p)\!\!\le\!\!\psi\}.$
\end{theorem}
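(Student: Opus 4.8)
The plan is to treat $\CBound_{\Dcal}(\Q)$ as a function of the two true quantities $r_{\Dcal}(\Q)$ and $d_{\Dcal}(\Q)$ and to replace each by a high-probability confidence value obtained from Seeger's form of the PAC-Bayesian theorem. The first thing I would establish is the relevant monotonicity: writing $\CBound_{\Dcal}(\Q) = 1 - \frac{(1-2r_{\Dcal}(\Q))^2}{1-2d_{\Dcal}(\Q)}$, one checks that for $r_{\Dcal}(\Q) < \tfrac12$ the C-Bound is nondecreasing in $r_{\Dcal}(\Q)$ and nonincreasing in $d_{\Dcal}(\Q)$ (the latter because $\partial/\partial d$ of $\frac{1}{1-2d}$ is positive). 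Consequently, to upper bound $\CBound_{\Dcal}(\Q)$ it suffices to bound $r_{\Dcal}(\Q)$ from above and $d_{\Dcal}(\Q)$ from below, which is exactly what the $\klmax$ in the numerator and the $\klmin$ in the denominator are designed to provide.

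For the Gibbs risk I would invoke Seeger's PAC-Bayesian inequality directly: with probability at least $1-\delta$ over $\Scal \sim \Dcal^m$, for all $\Q$ simultaneously, $\kl(r_{\Scal}(\Q) \| r_{\Dcal}(\Q)) \le \psi_r(\Q)$, and inverting this small-kl constraint yields $r_{\Dcal}(\Q) \le \klmax(r_{\Scal}(\Q) | \psi_r(\Q))$, the largest $p$ compatible with the constraint. For the disagreement the key device is to read $d_{\Dcal}(\Q)$ as a Gibbs risk over the \emph{product} hypothesis space $\Hcal \times \Hcal$: with pair-posterior $\Q\otimes\Q$, pair-prior $\P\otimes\P$, and loss $\Ibf[h_1(\xbf) \ne h_2(\xbf)]$ (which depends only on the inputs, so the i.i.d.\ sample of inputs suffices), the empirical and true averages are precisely $d_{\Scal}(\Q)$ and $d_{\Dcal}(\Q)$. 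Applying the same Seeger bound on this product space gives, with probability at least $1-\delta$, $\kl(d_{\Scal}(\Q) \| d_{\Dcal}(\Q)) \le \psi_d(\Q)$, where the divergence term is $\KL(\Q\otimes\Q \| \P\otimes\P) = 2\,\KL(\Q\|\P)$ by additivity of the KL over product measures---this is the origin of the factor $2$ in $\psi_d(\Q)$. Inverting the constraint from below yields $d_{\Dcal}(\Q) \ge \klmin(d_{\Scal}(\Q) | \psi_d(\Q))$.

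Finally I would take a union bound, so that both inequalities hold simultaneously with probability at least $1-2\delta$, and on that event substitute the upper bound on $r_{\Dcal}(\Q)$ and the lower bound on $d_{\Dcal}(\Q)$ into $\CBound_{\Dcal}(\Q)$ using the monotonicity above. The step requiring the most care---and the main obstacle---is verifying that the clippings $\min[\tfrac12,\cdot]$ and $\max[0,\cdot]$ never invalidate this substitution. The cleanest way is a case analysis on the confidence upper value for the Gibbs risk: if $\klmax(r_{\Scal}(\Q) | \psi_r(\Q)) < \tfrac12$ then $r_{\Dcal}(\Q) < \tfrac12$, Theorem~\ref{theorem:C-Bound} applies and monotonicity delivers the bound; if instead that value reaches $\tfrac12$, the clipped numerator $(1-2\cdot\tfrac12)^2$ vanishes and the right-hand side collapses to the trivial value $1$, which upper bounds $r^{\MV}_{\Dcal}(\Q)$ unconditionally, so the hypothesis $r_{\Dcal}(\Q)<\tfrac12$ is no longer needed. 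The $\max[0,\cdot]$ on the disagreement merely keeps the denominator in $(0,1]$, consistent with $d_{\Dcal}(\Q) \ge 0$, and preserves the direction of the inequality in the boundary regime $r_{\Dcal}(\Q)\to\tfrac12$.
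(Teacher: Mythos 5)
Your argument is correct and is exactly the standard derivation of this result: the paper itself states Theorem~\ref{theorem:C-Bound-seeger} without proof, citing PAC-Bound~1 of Germain et al., whose proof proceeds precisely as you do --- apply the Seeger/Maurer PAC-Bayes-$\kl$ theorem once to the Gibbs risk and once to the disagreement viewed as a Gibbs risk over $\Hcal\times\Hcal$ with posterior $\Q\otimes\Q$ (whence the factor $2$ in $\psi_d$), invert the small-$\kl$ constraints upward for $r$ and downward for $d$, take a union bound to get $1-2\delta$, and conclude by the monotonicity of $(r,d)\mapsto 1-\frac{(1-2r)^2}{1-2d}$. Your handling of the clippings $\min[\tfrac12,\cdot]$ and $\max[0,\cdot]$ is also the right way to dispose of the boundary case where the confidence upper bound on the Gibbs risk reaches $\tfrac12$.
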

The form of this bound makes the optimization a challenging task: the functions $\klmax $ and $\klmin $ do not benefit from closed-form solutions. 
However, we see in Section~\ref{section:C-Bound-seeger} that the optimization of $\klmax$ and $\klmin$ can be done by the bisection method~\cite{ReebDoerrGerwinnRakitsch2018}, leading to an easy-to-solve algorithm to optimize this PAC-Bayesian C-Bound.

\subsection{\mbox{Another Tighter Bound--Lacasse's View}}
\label{section:pac-bayesian-e-d}
The last theorem  on which we build our contributions is described below. 
Proposed initially by Lacasse {\it et al.}~\cite{LacasseLavioletteMarchandGermainUsunier2006}, its interest is that it simultaneously bounds the joint error and the disagreement (as explained by Germain {\it et al.}~\cite{GermainLacasseLavioletteMarchandRoy2015}). 
Here, to compute the bound, we need to find the worst C-Bound value that can be obtained with a couple of joint error and disagreement denoted by $(e,d)$ belonging to the set $A_\Scal(\Q)$ that is defined by
\begin{align*}
A_\Scal(\Q) = \Big\{ (e, d) \;\Big|\; \kl\LP e_{\Scal}(\Q),  d_{\Scal}(\Q)\|e, d\RP \le \kappa(\Q)  \Big\},
\end{align*}
where $\ \kappa(\Q) = \tfrac1m \left[ 2\KL(\Q\|\P) + \ln\!\tfrac{2\sqrt{m}+m}{\delta}\right]$,\\
and $\ \kl(q_1{,} q_2\| p_1{,} p_2) = q_1\ln\!\tfrac{q_1}{p_1} + q_2 \ln\!\tfrac{q_2}{p_2} + (1{-}q_1{-}q_2)\ln\!\tfrac{1{-}q_1{-}q_2}{1{-}p_1{-}p_2}$.\\
The set $A_\Scal(\Q)$ can actually contain some pairs not achievable by any $\Dcal$, it can then be restricted to the valid subset {\small $\widetilde{A}_\Scal(\Q)$} defined in the theorem below.
\begin{theorem}[\scriptsize PAC-Bayesian C-Bound (PAC-Bound 2) of Germain et al.~\cite{GermainLacasseLavioletteMarchandRoy2015}]
\label{theorem:C-Bound-e-d}
Under the same assumptions as Theorem~\ref{theorem:C-Bound-mcallester}, we have
\begin{align*}
    &\PP_{\Scal{\sim}\Dcal^m}\left(\forall \Q \mbox{ on }\Hcal,\  \CBound_{\D}({\Q})\le \sup_{(e, d)\in \widetilde{A}_\Scal(\Q)} \LB 1- \frac{\LP 1-(2e+d)\RP^2}{1-2d} \RB \right) \geq 1{-}\delta,\\
    &\text{where }
    \widetilde{A}_\Scal(\Q) = \left\{ (e, d)\!\in\! A_\Scal(\Q) \ \Big|\  d\le 2 \sqrt{e}{-}2e\,,\ 2e{+}d<1 \right\}.
\end{align*}
\end{theorem}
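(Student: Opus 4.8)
The plan is to reduce the statement to a single, \emph{joint} PAC-Bayesian control of the true pair $(e_{\D}(\Q), d_{\D}(\Q))$, exploiting that both quantities are expectations over \emph{pairs} of voters $(h_1,h_2)\sim\Q\times\Q$. For a fixed pair and a fixed example $(\xbf,y)$, exactly one of three mutually exclusive events occurs: both voters err (contributing to the joint error), the voters disagree (contributing to the disagreement), or both are correct. Thus $(e_{\D}(\Q), d_{\D}(\Q))$ are two coordinates of a trinomial law, and the function $\kl(q_1,q_2\|p_1,p_2)$ is exactly the KL divergence between two such three-outcome laws. I would therefore work in the product space $\Hcal\times\Hcal$ with posterior $\Q\times\Q$ and prior $\P\times\P$, and apply a PAC-Bayesian theorem of Seeger/Maurer type adapted to this trinomial loss.

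The first and main step is to bound the exponential moment $\EE_{\Scal\sim\D^m}\EE_{(h_1,h_2)\sim\P\times\P}\exp\LB m\,\kl(\hat e_{h_1,h_2}, \hat d_{h_1,h_2}\,\|\,e_{h_1,h_2}, d_{h_1,h_2})\RB$, where $\hat e_{h_1,h_2}, \hat d_{h_1,h_2}$ are the empirical frequencies of the two events for the fixed pair. Since the $m$ examples are i.i.d., $(m\hat e, m\hat d)$ is multinomially distributed for each fixed pair, so the inner expectation collapses to the trinomial analogue of Maurer's sum $\xi(m)=\sum_k\binom{m}{k}(\tfrac{k}{m})^k(\tfrac{m-k}{m})^{m-k}$; one bounds this three-outcome normalizing constant by $2\sqrt{m}+m$, which is exactly the quantity appearing in $\kappa(\Q)$. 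I expect this combinatorial estimate to be the main obstacle, as it is the genuine generalization of the binomial $\xi(m)\le 2\sqrt m$ bound and dictates the precise constant. Combining this moment bound with the Donsker--Varadhan change of measure and Markov's inequality, then passing from per-pair frequencies to their $\Q\times\Q$-averages by joint convexity of $\kl$ and Jensen's inequality (and using $\KL(\Q\times\Q\,\|\,\P\times\P)=2\,\KL(\Q\|\P)$, which produces the factor $2$ in $\kappa$), yields that with probability at least $1-\delta$ over $\Scal$, \emph{simultaneously for all} $\Q$, $\kl\LP e_{\Scal}(\Q), d_{\Scal}(\Q)\,\|\,e_{\D}(\Q), d_{\D}(\Q)\RP \le \kappa(\Q)$, i.e.\ $(e_{\D}(\Q), d_{\D}(\Q))\in A_{\Scal}(\Q)$.

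On this event the conclusion follows by monotonicity. The C-Bound value $\CBound_{\D}(\Q)=1-\tfrac{(1-(2e_{\D}(\Q)+d_{\D}(\Q)))^2}{1-2d_{\D}(\Q)}$ is a fixed function of $(e_{\D}(\Q), d_{\D}(\Q))$; since this pair lies in $A_{\Scal}(\Q)$, we immediately get $\CBound_{\D}(\Q)\le \sup_{(e,d)\in A_{\Scal}(\Q)}\LB 1-\tfrac{(1-(2e+d))^2}{1-2d}\RB$.

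Finally, I would tighten $A_{\Scal}(\Q)$ to $\widetilde A_{\Scal}(\Q)$ by observing that the true pair automatically satisfies two deterministic constraints that no $\Dcal$ can violate, so restricting the supremum to the smaller set keeps the true pair inside and hence preserves the inequality while strictly tightening the bound. Writing $p(\xbf)=\Q(\{h:h(\xbf)\ne y\})$, one has $e_{\D}(\Q)=\EE_{\xbf}[p(\xbf)^2]$ and $d_{\D}(\Q)=\EE_{\xbf}[2p(\xbf)(1-p(\xbf))]$, so Jensen's inequality gives $e_{\D}(\Q)=\EE_{\xbf}[p^2]\ge(\EE_{\xbf}[p])^2=(e_{\D}(\Q)+\tfrac12 d_{\D}(\Q))^2$, which rearranges exactly to $d_{\D}(\Q)\le 2\sqrt{e_{\D}(\Q)}-2e_{\D}(\Q)$; moreover the standing assumption $r_{\D}(\Q)<\tfrac12$ is equivalent to $2e_{\D}(\Q)+d_{\D}(\Q)<1$. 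Thus the true pair lies in $\widetilde A_{\Scal}(\Q)=A_{\Scal}(\Q)\cap\{d\le 2\sqrt{e}-2e,\ 2e+d<1\}$, completing the argument.
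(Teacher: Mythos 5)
Your proposal is correct and reconstructs essentially the argument behind this result, which the paper itself only recalls from Germain et al.~\cite{GermainLacasseLavioletteMarchandRoy2015} without reproving: a trinomial Seeger/Maurer-type bound on the product space $\Hcal\times\Hcal$ (with the three-outcome normalizing constant bounded by $2\sqrt{m}+m$ and $\KL(\Q\times\Q\,\|\,\P\times\P)=2\KL(\Q\|\P)$, which together yield $\kappa(\Q)$), followed by the deterministic restriction $d_{\D}(\Q)\le 2\sqrt{e_{\D}(\Q)}-2e_{\D}(\Q)$ that the paper itself attributes to Prop.~9 of that reference. The only point worth making explicit is that the final step needs $r_{\D}(\Q)<\tfrac{1}{2}$ (equivalently $2e_{\D}(\Q)+d_{\D}(\Q)<1$) for the true pair to lie in $\widetilde{A}_\Scal(\Q)$, an assumption you correctly import from Theorem~\ref{theorem:C-Bound} but which is only implicit in the statement as recalled here.
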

Optimizing this bound {\it w.r.t.} $\Q$ can be challenging, since it boils down to optimize indirectly the set {\small $\widetilde{A}_\Scal(\Q)$}.
Hence, a direct optimization by gradient descent is not possible. 
In Section~\ref{section:contribution-e-d} we derive an approximation easier to optimize. 

\section{Self-Bounding Algorithms for PAC-Bayesian C-Bounds} 
\label{section:contribution}

In this section, we present our contribution that consists in proposing three self-bounding algorithms to directly minimize the PAC-Bayesian C-Bounds.

\subsection{Algorithm Based on McAllester's View}
\label{section:contribution-mcallester}
\begin{algorithm}[t]
  \caption{Minimization of Equation \eqref{eq:C-Bound-mcallester} by GD}
  \begin{algorithmic}
    \State{{\bf Given: } learning sample $\Scal$,  prior distribution $\P$ on $\Hcal$, the objective function $G_{\Scal}^{\tt M}(\Q)$\\
    \phantom{\bf Given: } Update function\footnotemark\  \textsc{update-$\Q$}}
    \State{{\bf Hyperparameters: } 
    number of iterations $T$}
    \Function{minimize-$\Q$}{}
    \State{$\Q \leftarrow \P$}
    \State{\textbf{for} $t\leftarrow 1$ to $T$ {\bf do} $\Q\leftarrow$\textsc{update-$\Q$}$(G^{\tt M}_{\Scal}(\Q))$}
    \State{\Return{$\Q$}}
    \EndFunction
  \end{algorithmic}
  \label{algo:mcallester}
\end{algorithm}
\footnotetext{\textsc{update-$\Q$} is a generic update function, \ie, it can be for example a standard update of GD or the update of another algorithm like Adam~\cite{KingmaBa2015} or COCOB~\cite{OrabonaTommasi2017}.} 

We derive in Algorithm~\ref{algo:mcallester} a method to directly minimize the  PAC-Bayesian \mbox{C-Bound} of Theorem~\ref{theorem:C-Bound-mcallester} by Gradient Descent (GD).
An important aspect of the optimization is that if $r_{\Scal}(\Q) {+} \text{\footnotesize$\sqrt{\!\tfrac{1}{2}\psi_{r}(\Q)}$}\ge\tfrac{1}{2}$, the gradient of the numerator in $\CBound^{\tt M}_{\Scal}(\Q)$ with respect to $\Q$ is 0 which makes the optimization impossible.
Hence, we aim at minimizing the following constraint optimization problem:
\begin{align*}
    \min_\Q& \underbrace{\LB 1-\frac{\LP 1-2\min\LB\frac{1}{2}, \!r_{\Scal}(\Q) {+}\sqrt{\!\tfrac{1}{2}\psi_{r}(\Q)}\RB\RP^2}{1-2\max\LB 0, d_{\Scal}(\Q){-}\sqrt{\tfrac{1}{2}\psi_{d}(\Q)}\RB}\RB}_{\displaystyle \CBound^{\tt M}_{\Scal}(\Q)}\quad\text{s.t }\quad r_{\Scal}(\Q) {+}\sqrt{\!\tfrac{1}{2}\psi_{r}(\Q)}\le\tfrac{1}{2}.
\end{align*}
From this formulation, we deduce a non-constrained optimization problem:$\ $ $\min_\Q\left[ \CBound^{\tt M}_{\Scal}(\Q) + \Bbf(r_{\Scal}(\Q) {+} \text{\scriptsize$\sqrt{\!\tfrac{1}{2}\psi_{r}(\Q)}$}{-}\tfrac{1}{2})\right]$, where $\Bbf$ is the barrier function defined as $\Bbf(a)\!=\!0$ if $a\!\le\! 0$ and $\Bbf(a)\!=\!+\infty$ otherwise.
Due to the nature of $\Bbf$, this problem is not suitable for optimization: the objective function will be infinite when $a\!>\!0$.
To tackle this drawback, we replace $\Bbf$ by the approximation introduced by Kervadec {\it et al.}~\cite{KervadecDolzYuanDesrosiersGrangerAyed2019} called the log-barrier extension and defined as 
\begin{align*}
    \Bbf_{\lambda}(a) = \left\{\begin{array}{cc}
        -\tfrac{1}{\lambda}\ln(-a), & \text{if } a \le -\tfrac{1}{\lambda^2}, \\[1mm]
        \lambda a{-}\tfrac{1}{\lambda}\ln(\tfrac{1}{\lambda^2}){+}\tfrac{1}{\lambda}, & \text{otherwise.}
    \end{array}\right.
\end{align*}
In fact, $\Bbf_{\lambda}$ tends to $\Bbf$ when $\lambda$ tends to $+\infty$.
Compared to the standard log-barrier\footnote{The reader can refer to \cite{BoydVandenberghe2014} for an introduction of interior-point methods.}, the function $\Bbf_{\lambda}$ is differentiable even when the constraint is not satisfied, \ie, when $a > 0$.
By taking into account the constraint \mbox{$r_{\Scal}(\Q) {+}\sqrt{\tfrac{1}{2}\psi_{r}(\Q)}\!\le\!\tfrac{1}{2}$}, we solve by GD with Algorithm~\ref{algo:mcallester} the following problem:
\begin{align*}
   \min_{\Q}\ G_{\Scal}^{\tt M}(\Q) \ =\ \min_{\Q}\  \CBound^{\tt M}_{\Scal}(\Q) + \Bbf_{\lambda}\left(r_{\Scal}(\Q) {+} \sqrt{\tfrac{1}{2}\psi_{r}(\Q)} {-}\tfrac{1}{2}\right).
\end{align*}
For a given $\lambda$, the optimizer will thus find a solution with a good trade-off between minimizing $\CBound^{\tt M}_{\Scal}(\Q)$ and the log-barrier extension function $\Bbf_{\lambda}$.
As we show in the experiments, minimizing the McAllester-based bound does not lead to the tightest bound.
Indeed, as mentioned in Section \ref{section:pac-bayesian-C-Bounds}, such bound is looser than Seeger-based bounds, and leads to a looser PAC-Bayesian C-Bound. 

\subsection{Algorithm Based on Seeger's View}
\label{section:contribution-seeger}

In order to obtain better generalization guarantees, we should optimize the Seeger-based C-bound of Theorem~\ref{theorem:C-Bound-seeger}. 
In the same way as in the previous section, we seek at minimizing the following optimization problem:
\begin{align*}
    \min_\Q& \underbrace{\left[1{-}\frac{\big(1{-}2\min\LB\frac{1}{2},  \klmax \LP r_{\Scal}(\Q) \;\middle|\; \psi_{r}(\Q)\RP\RB\big)^2}{1{-}2\max\LB 0,
  \klmin 
  \LP d_{\Scal}(\Q) \;\middle|\; \psi_{d}(\Q)\RP\RB}\right]}_{\displaystyle\CBound^{\tt S}_{\Scal}(\Q)}\quad\text{s.t }\quad \klmax \LP r_{\Scal}(\Q) \;\middle|\; \psi_{r}(\Q)\RP\le\tfrac{1}{2},
\end{align*}
with $\klmax (q | \psi){=}\max\{ p\!\in\!(0,\! 1) | \kl(q\|p)\!\!\le\!\!\psi\}$, and $\klmin (q | \psi){=}\min\{ p\!\in\!(0,\! 1) | \kl(q\|p)\!\!\le\!\!\psi\}.$
For the same reasons as for deriving Algorithm~\ref{algo:mcallester}, we propose to solve by GD:
\begin{align*}
 \min_{\Q} \  G^{\tt S}_{\Scal}(\Q) \ =\ \min_{\Q}\  \CBound^{\tt S}_{\Scal}(\Q) + \Bbf_{\lambda}\big(\klmax \LP r_{\Scal}(\Q) \;\middle|\; \psi_{r}(\Q)\RP{-}\tfrac{1}{2}\big).
\end{align*}
The main challenge to optimize it is to evaluate $\klmax$ or $\klmin$ and to compute their derivatives.
To do so, we follow the bisection method to calculate $\klmax$ and $\klmin$ proposed by Reeb {\it et al.}~\cite{ReebDoerrGerwinnRakitsch2018}.
This method is summarized in the functions {\sc compute-$\klmax(q|\psi)$} and {\sc compute-$\klmin(q|\psi)$} of Algorithm~\ref{algo:seeger}, and consists in refining iteratively an interval $[p_{\text{min}}, p_{\text{max}}]$ with $p\in [p_{\text{min}}, p_{\text{max}}]$ such that \mbox{$\kl(q\|p)\!=\!\psi$}.
For the sake of completeness, we provide the derivatives of $\klmin$ and $\klmax$ with respect to $q$ and $\psi$, that are:
\begin{align}
    \frac{\partial {\rm k(}q|\psi)}{\partial q} = \frac{\ln\frac{1-q}{1-{\rm k(}q|\psi)}-\ln\frac{q}{{\rm k(}q|\psi)}}{\frac{1-q}{1-{\rm k(}q|\psi)}-\frac{q}{{\rm k(}q|\psi)}}, \text{ and } \frac{\partial {\rm k(}q|\psi)}{\partial \psi} = \frac{1}{\frac{1-q}{1-{\rm k(}q|\psi)}-\frac{q}{{\rm k(}q|\psi)}},\label{eq:deriv-kl}
\end{align}
with ${\rm k}$ is either $\klmin$ or $\klmax$.
To compute the derivatives with respect to the posterior $\Q$, we use the chain rule for differentiation with a deep learning framework (such as PyTorch~\cite{Paszke2019}). 
The global algorithm is summarized in Algorithm~\ref{algo:seeger}.

\begin{algorithm}[h!]
 \caption{Minimization of Equation \eqref{eq:C-Bound-mcallester} by GD}
  \begin{algorithmic}
  \State{{\bf Given: } learning sample $\Scal$,  prior distribution $\P$ on $\Hcal$, the objective function $G_{\Scal}^{\tt M}(\Q)$\\
  \phantom{\bf Given: } Update function \textsc{update-$\Q$}}
    \State{{\bf Hyperparameters: } 
    number of iterations $T$}
    \Function{minimize-$\Q$}{}
    \State{$\Q \leftarrow \P$}
    \For{$t\leftarrow 1$ to $T$}
        \State{Compute $G^{\tt S}_{\Scal}(\Q)$ using {\sc compute-$\klmax(q|\psi)$} and {\sc compute-$\klmin(q|\psi)$}}
        \State{$\Q\leftarrow$\textsc{update-$\Q$}$(G^{\tt S}_{\Scal}(\Q))$ (thanks to the derivatives in Equation~\eqref{eq:deriv-kl})}
    \EndFor
    \State{\Return{$\Q$}}
    \EndFunction\\
    {\centerline{\rule{0.75\linewidth}{0.25pt}}}
    \State{{\bf Hyperparameters: } tolerance $\epsilon$, maximal number of iterations $T_\text{max}$}
    \Function{compute-$\klmax(q|\psi)$ (resp. compute-$\klmin(q|\psi)$)}{}
    \State{$p_{\text{max}}{\leftarrow}1$ and $p_{\text{min}}{\leftarrow}q$ (resp. $p_{\text{max}}{\leftarrow}q$ and $p_{\text{min}}{\leftarrow}0$)}
    \For{$t\leftarrow 1$ to $T_{\text{max}}$}
        \State{$p = \tfrac{1}{2}\LB p_{\text{min}}{+}p_{\text{max}}\RB$}
        \State{\textbf{if} $\kl(q\|p)=\psi$ or  $(p_{\text{min}}{-}p_{\text{max}})<\epsilon$ \textbf{ then return} $p$}
        \State{\textbf{if} $\kl(q\|p) > \psi$ \textbf{ then } $p_\text{max}=p$ (resp. $p_\text{min}=p$)}
        \State{\textbf{if} $\kl(q\|p) < \psi$ \textbf{ then } $p_\text{min}=p$ (resp. $p_\text{max}=p$)}
    \EndFor
     \State{\Return{$p$}}
    \EndFunction
  \end{algorithmic}
  \label{algo:seeger}
\end{algorithm}

\subsection{Algorithm Based on Lacasse's View}
\label{section:contribution-e-d}
Theorem~\ref{theorem:C-Bound-e-d} jointly upper-bounds the joint error $e_{\Dcal}(\Q)$ and the disagreement $d_{\Dcal}(\Q)$; But as pointed out in Section~\ref{section:pac-bayesian-e-d} its optimization can be hard.
To ease its manipulation, we derive below a C-Bound resulting of a reformulation of the constraints involved in the set \mbox{{\small $\widetilde{A}$}$_{\Scal}(\Q)\!=\!\{ (e, d)\!\in\! A_\Scal(\Q) \,|\, d\!\le\! 2\sqrt{e}{-}2e, 2e{+}d\!<\!1 \}$}. 
\begin{theorem}
Under the same assumptions as Theorem~\ref{theorem:C-Bound-mcallester}, we have
\begin{align}
    &\PP_{\Scal{\sim}\Dcal^m} \Bigg( \CBound_{\D}({\Q})\le \sup_{(e, d)\in \widehat{A}_\Scal(\Q)} \underbrace{\LB 1- \frac{\big[ 1-(2e+d)\big]^2}{1-2d} \RB}_{\displaystyle\CBound^{\tt L}(e,d)}\Bigg)\geq 1{-}\delta,\label{eq:new-C-Bound-e-d}\\ 
 \nonumber    &\text{where}\ \ \widehat{A}_\Scal(\Q) = \LC (e, d)\!\in\! A_\Scal(\Q) \,\middle|\, d\le 2\sqrt{\min\LP e, \tfrac{1}{4}\RP}{-}2e,\  d<\tfrac{1}{2} \RC,\\
 \nonumber \text{and}\; A_\Scal(\Q) &{=} \big\{ \!(e{,} d) \big| \kl\!\LP e_{\Scal}(\Q){,}  d_{\Scal}(\Q)\|e{,} d\RP\! \le\! \kappa(\Q)\!  \big\}, \text{with}\; \textstyle \kappa(\Q) {=} 
   \frac{2\KL(\Q\|\P){+}\ln\!\tfrac{2\sqrt{m}{+}m}{\delta}}{m}\!.
   \end{align}
\label{theorem:new-C-Bound-e-d}
\end{theorem}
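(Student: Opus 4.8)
The plan is to obtain Theorem~\ref{theorem:new-C-Bound-e-d} directly from Theorem~\ref{theorem:C-Bound-e-d}, by showing that replacing the feasible set $\widetilde{A}_\Scal(\Q)$ with $\widehat{A}_\Scal(\Q)$ does not decrease the supremum of $\CBound^{\tt L}$ while keeping that quantity well-defined. Since Theorem~\ref{theorem:C-Bound-e-d} already guarantees, with probability at least $1-\delta$ over $\Scal\sim\Dcal^m$, that $\CBound_\D(\Q)\le\sup_{(e,d)\in\widetilde{A}_\Scal(\Q)}\CBound^{\tt L}(e,d)$, it suffices to prove $\sup_{(e,d)\in\widetilde{A}_\Scal(\Q)}\CBound^{\tt L}(e,d)\le\sup_{(e,d)\in\widehat{A}_\Scal(\Q)}\CBound^{\tt L}(e,d)$, after which the same $1-\delta$ confidence transfers verbatim. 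Note that both sets are intersected with the same $A_\Scal(\Q)$, so I only need to compare the two remaining defining inequalities.

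First I would analyse the geometry of those inequalities. The achievability curve $d=2\sqrt{e}-2e$ is concave in $\sqrt{e}$ and peaks at $(e,d)=(\tfrac14,\tfrac12)$, where it is tangent to the line $2e+d=1$. Writing $u=\sqrt{e}$ gives $\big(2\sqrt{e}-2e\big)-\big(1-2e\big)=2u-1$, so the curve lies strictly below the line for $e<\tfrac14$ and strictly above it for $e>\tfrac14$. Consequently, in $\widetilde{A}_\Scal(\Q)$ the binding upper constraint on $d$ is the achievability curve when $e\le\tfrac14$ and the Gibbs-risk line $2e+d<1$ when $e>\tfrac14$. This is exactly what the factor $2\sqrt{\min(e,\tfrac14)}-2e$ encodes: for $e\le\tfrac14$ it equals $2\sqrt{e}-2e$, and for $e>\tfrac14$ it collapses to $1-2e$, i.e.\ to $2e+d\le1$.

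From this case analysis I would conclude that $\widetilde{A}_\Scal(\Q)=\widehat{A}_\Scal(\Q)$ whenever $e\le\tfrac14$ — the added constraint $d<\tfrac12$ only removes the single tangency point, which $2e+d<1$ already excludes — and that for $e>\tfrac14$ the sole change is the relaxation of the strict inequality $2e+d<1$ to $2e+d\le1$. In particular $\widetilde{A}_\Scal(\Q)\subseteq\widehat{A}_\Scal(\Q)$, which immediately yields the required inequality between the two suprema, and the constraint $d<\tfrac12$ keeps $1-2d>0$ so that $\CBound^{\tt L}$ remains finite on $\widehat{A}_\Scal(\Q)$. To argue that no tightness is lost, I would invoke continuity of $\CBound^{\tt L}$ on $\{d<\tfrac12\}$: the boundary points $\{2e+d=1\}$ newly admitted for $e>\tfrac14$ satisfy $d=1-2e<\tfrac12$ and give $\CBound^{\tt L}=1$, but they are limits of interior points of $\widetilde{A}_\Scal(\Q)$, so the two suprema in fact coincide.

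The main obstacle I anticipate is careful bookkeeping at the boundary rather than any deep inequality: one must verify the tangency at $(\tfrac14,\tfrac12)$, track which of the two constraints binds on each side of $e=\tfrac14$, and justify the passage from strict to non-strict inequalities by continuity without ever letting the denominator $1-2d$ vanish. Once the equivalence of the feasible regions up to a supremum-preserving boundary is established, the probabilistic statement is inherited unchanged from Theorem~\ref{theorem:C-Bound-e-d}.
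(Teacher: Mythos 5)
Your proof is correct, but it reaches the theorem by a genuinely different route than the paper. You treat Theorem~\ref{theorem:C-Bound-e-d} as a black box and reduce everything to the set inclusion $\widetilde{A}_\Scal(\Q)\subseteq\widehat{A}_\Scal(\Q)$: your case analysis around $e=\tfrac14$ (the curve $d=2\sqrt{e}-2e$ lies below the line $2e+d=1$ for $e<\tfrac14$ and above it for $e>\tfrac14$, with tangency at $(\tfrac14,\tfrac12)$) is exactly what is needed, and since the constraint $d<\tfrac12$ keeps $\CBound^{\tt L}$ well defined on $\widehat{A}_\Scal(\Q)$, monotonicity of the supremum under set inclusion transfers the $1-\delta$ statement verbatim. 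The paper instead argues directly from the underlying concentration event rather than from Theorem~\ref{theorem:C-Bound-e-d}: it notes that the three added constraints ($d\le 2\sqrt{e}-2e$, $d\le 1-2e$, $d<\tfrac12$) are all satisfied by the true pair $(e_\Dcal(\Q),d_\Dcal(\Q))$, so that this pair lies in $\widehat{A}_\Scal(\Q)$ with probability at least $1-\delta$, and then splits into two cases: either every feasible point satisfies $2e+d<1$, in which case Theorem~\ref{theorem:C-Bound} applies to the true pair and its value is dominated by the supremum, or some feasible point attains $2e+d=1$, in which case the supremum equals $1$ and the bound holds trivially. Your reduction is shorter and more modular, and your continuity remark additionally shows that the relaxation costs nothing in tightness (though that side claim needs care in the degenerate case $2e_\Scal(\Q)+d_\Scal(\Q)=1$, where the limiting segment toward the empirical pair may degenerate; it is not needed for validity). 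The paper's argument, in exchange, is self-contained, does not presuppose the exact form of $\widetilde{A}_\Scal(\Q)$, and makes explicit the reason $\widehat{A}_\Scal(\Q)$ is a legitimate feasible set, namely that it still captures the true pair with high probability.
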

\begin{proof}
Beforehand, we explain how we fixed the constraints involved in $\widehat{A}_\Scal(\Q)$.
We add to  $A_\Scal(\Q)$ three constraints: $d\!\le\!  2\sqrt{e}{-}2e$ (from Prop. 9 of~\cite{GermainLacasseLavioletteMarchandRoy2015}), $d\!\le\!  1{-}2e$, and $d\!<\!\tfrac{1}{2}$. 
We remark that when $e\!\le\! \tfrac{1}{4}$, we have $2\sqrt{e}{-}2e\!\le\! 1{-}2e$.
Then, we merge $d\!\le\!  2\sqrt{e}{-}2e$ and $d\!\le\!  1{-}2e$ into $d\!\le\!  2\sqrt{\min\LP e, \tfrac{1}{4}\RP}{-}2e$.
Indeed, we have
$$d \le  2\sqrt{\min(e, \tfrac{1}{4})}{-}2e \ \Longleftrightarrow\ \left\{
\begin{array}{ll}
d \le  2\sqrt{e}-2e&\text{ if }e\le \frac{1}{4},\\[1mm]
d < 1{-}2e &\text{ if }e\geq\frac14.
\end{array}\right.$$
We prove now that under the constraints involved in $\widehat{A}_\Scal(\Q)$, we still have a valid bound on $\CBound_{\D}({\Q})$.
To do so, we consider two cases.\\[1mm]
{\bf Case 1}: If for all $(e,d)\in \widehat{A}_\Scal(\Q)$ we have $2e{+}d\!<\!1$.\\ 
In this case $(e_{\Dcal}(\Q), d_{\Dcal}(\Q))\!\in\! \widehat{A}_\Scal(\Q)$, then we have $2e_{\Dcal}(\Q){+}d_{\Dcal}(\Q)\!<\!1$ and 
Theorem~\ref{theorem:C-Bound} holds. 
We have
$\CBound_{\Dcal}(\Q) = 
1-\frac{\LB 1-\LP 2e_{\Dcal}(\Q)+d_{\Dcal}(\Q)\RP \RB^2}{1-2d_{\Dcal}(\Q)}
\leq \sup_{(e,d)\in\widehat{A}_{\Scal}(\Q)}\CBound^{\tt L}(e,d)$.\\[1mm]
{\bf Case 2}: If there exists $(e,d)\in \widehat{A}_\Scal(\Q)$  such that $2e{+}d\!=\!1$.\\ 
We have $\sup_{(e,d)\in\widehat{A}_{\Scal}(\Q)}\CBound^{\tt L}(e,d) = 1$ that is a valid bound on $\CBound_{\D}({\Q})$. \qed
\end{proof}

\noindent Theorem~\ref{theorem:new-C-Bound-e-d} suggests then the following constrained optimization problem: 
\begin{align*}
    &\min_{\Q}\!\left\{\! \sup_{\scalebox{0.8}{$(e{,}d){\in}\!\LB0,\!\tfrac{1}{2}\RB^2$}}\!\!
    \LP 1\!-\! \frac{\big[ 1\!-\!(2e\!+\!d)\big]^2}{1\!-\!2d}\! \RP\,   \text{s.t.}\,  (e, d)\! \in\! \widehat{A}_\Scal(\Q) \! \right\} \!\text{ s.t. } 2e_{\Scal}(\Q){+}d_{\Scal}(\Q)\!\le\! 1,
\end{align*}
with $\widehat{A}_\Scal(\Q) {=} \big\{ (e, d) \big| d\le 2\sqrt{\min\LP e, \tfrac{1}{4}\RP}{-}2e,\  d\!<\!\tfrac{1}{2},\ \kl\!\LP e_{\Scal}(\Q){,}  d_{\Scal}(\Q)\|e{,} d\RP\! \le\! \kappa(\Q)\!  \big\}$.
Actually, we can rewrite this constrained optimization problem into an unconstrained one using the barrier function. 
We obtain
\begin{align}
    \min_{\Q} \Bigg\{&\max_{\scalebox{0.8}{$(e{,}d){\in}\!\LB0,\!\tfrac{1}{2}\RB^2$}}  \Bigg(
     \CBound^{\tt L}(e,d) - \Bbf\!\LB d {-} 2\sqrt{\min\LP e, \tfrac{1}{4}\RP}{-}2e\RB -\Bbf\!\LB d{-}\tfrac{1}{2}\RB \nonumber\\
     &\quad - \Bbf\Big[\kl\LP e_{\Scal}(\Q),  d_{\Scal}(\Q)\|e, d\RP{-}\kappa(\Q)\Big]
   \Bigg) + \Bbf\Big[ 2e_{\Scal}(\Q){+}d_{\Scal}(\Q){-}1\Big]\Bigg\}, \label{eq:k_Q_param_nu}
\end{align}
where $\CBound^{\tt L}(e,d)=1-\tfrac{\LP 1-(2e+d)\RP^2}{1-2d}$ if $d\!<\!\frac12$, and $\CBound^{\tt L}(e,d)\!=\!1$ otherwise.
However, this problem cannot be optimized directly by GD.
In this case, we have a \mbox{min-max} optimization problem, \ie, for each descent step we need to find the couple $(e, d)$ that maximizes the $\CBound^{\tt L}(e,d)$ given the three constraints that define $\widehat{A}_\Scal(\Q)$ before updating the posterior distribution $\Q$.

First, to derive our optimization procedure, we focus on the inner maximization problem when $e_{\Scal}(\Q)$ and $d_{\Scal}(\Q)$ are fixed in order to find the optimal $(e,d)$.
However, the function $\CBound^{\tt L}(e,d)$ we aim at maximizing is not concave for all $(e,d)\!\in\!\R^2$, implying that the  implementation of its maximization can be hard\footnote{For example, when using \algor{CVXPY}~\cite{DiamondBoyd2016}, that uses Disciplined Convex Programming (DCP~\cite{GrantBoydYe2006}), the maximization of a non-concave function is not possible.}. 
Fortunately, $\CBound^{\tt L}(e,d)$ is quasi-concave~\cite{GermainLacasseLavioletteMarchandRoy2015} for $(e, d)\in[0, 1]\times[0, \frac{1}{2}]$.
Then by  definition of quasi-concavity, we have:
\begin{align*}
&\forall \alpha\in [0,1],\quad \left\{ (e, d) \,\middle|\, 1- \frac{\big[ 1-(2e+d)\big]^2}{1-2d} \ge 1-\alpha \right\}\\
\Longleftrightarrow\quad  &\forall \alpha\in [0,1],\quad  \left\{ (e, d) \ \middle|\  
\alpha(1{-}2d)-\Big[1{-}(2e{+}d)\Big]^2 \ge 0\right\}.
\end{align*}

\begin{algorithm}[t]
  \caption{Minimization of Equation~\eqref{eq:new-C-Bound-e-d} by GD}
  \begin{algorithmic}
    \State{{\bf Given: } learning sample $\Scal$, prior $\P$ on $\Hcal$, the objective function $G^{e^*\!{,}d^*}_{\Scal}\!\!(\Q)$\\
    \phantom{\bf Given: } Update function \textsc{update-$\Q$}}
    \State{{\bf Hyperparameters: }
    number of iterations $T$}
    \Function{minimize-$\Q$}{}
    \State{$\Q \leftarrow \P$}
    \For{$t\leftarrow 1$ to $T$}
        \State{$(e^*, d^*) \leftarrow $\Call{maximize-$e$-$d$}{$e_{\Scal}(\Q), d_{\Scal}(\Q)$}}
        \State{$\Q \leftarrow$ \textsc{update-$\Q$}$(G^{e^*\!{,}d^*}_{\Scal}\!\!(\Q))$}
    \EndFor
    \State{\Return{$\Q$}}
    \EndFunction\\
    {\centerline{\rule{0.75\linewidth}{0.25pt}}}
      \State{{\bf Given:} learning sample $\Scal$, joint error $e_{\Scal}(\Q)$, disagreement $d_{\Scal}(\Q)$}
\State{{\bf Hyperparameters: }tolerance $\epsilon$}
\Function{maximize-$e$-$d$}{$e_{\Scal}(\Q), d_{\Scal}(\Q)$}
\State{$\alpha_{\text{min}} = 0$ and $\alpha_{\text{max}}=1$}
\While{$\alpha_{\text{max}}-\alpha_{\text{min}}>\epsilon$}
\State{$\alpha=\tfrac{1}{2}(\alpha_{\text{min}}+\alpha_{\text{max}})$}
\State{$(e, d) \leftarrow$ Solve Equation~\eqref{eq:min-e-d}}
\State{\textbf{if} $\CBound^{\tt L}(e,d) \ge 1{-}\alpha$ \textbf{then} $\alpha_{\text{max}} \leftarrow \alpha$ \textbf{else} $\alpha_{\text{min}} \leftarrow \alpha$}
\EndWhile
\State{\Return{$(e, d)$}}
\EndFunction
\end{algorithmic}
\label{algo:e-d-min}
\end{algorithm}
\noindent Hence, for any fixed $\alpha\!\in\![0, 1]$ we can look for $(e, d)$ that maximizes $\CBound^{\tt L}(e,d)$ and respects the constraints involved in $\widehat{A}_\Scal(\Q)$.
This is equivalent to solve the following problem for a given $\alpha\in[0, 1]$:
\begin{align}
    \max_{(e, d)\in[0, \frac{1}{2}]^2}\quad  \alpha(1{-}2d)-\Big[1{-}(2e{+}d)\Big]^2\label{eq:min-e-d}\hspace{2.5cm}\\
    \text{ s.t. } \quad d\le 2\sqrt{\min\LP e, \tfrac{1}{4}\RP}{-}2e\quad\text{ and }\quad\kl\LP e_{\Scal}(\Q),  d_{\Scal}(\Q)\|e, d\RP\le \kappa(\Q)\nonumber.
\end{align}
In fact, we aim at finding $\alpha\in[0,1]$ such that the maximization of Equation~\eqref{eq:min-e-d} leads to $1{-}\alpha$ equal to the largest value of $C^{\tt L}(e, d)$ under the constraints.
To do so, we make use of the ``Bisection method for quasi-convex optimization''~\cite{BoydVandenberghe2014} that is summarized in  \textsc{maximize-$e$-$d$} in Algorithm~\ref{algo:e-d-min}.
We denote by $(e^{*\!}, d^*)$ the solution of Equation~\eqref{eq:min-e-d}.
It remains then to solve the outer minimization problem that becomes:
\begin{align*}
    \min_{\Q}\big\{\ \Bbf\LB 2e_{\Scal}(\Q){+}d_{\Scal}(\Q){-}1\RB  -\Bbf\LB\kl\LP e_{\Scal}(\Q),  d_{\Scal}(\Q)\|e^{*\!}, d^*\RP{-}\kappa(\Q)\RB\ \big\}.
\end{align*}
Since the barrier function $\Bbf$ is not suitable for optimization, we approximate this problem by replacing $\Bbf$ by the log-barrier extension $\Bbf_\lambda$, \ie, we have
\begin{align*}
   \min_{\Q}\  G^{e^*\!{,}d^*}_{\Scal}\!\!(\Q)\  =\   \min_{\Q}\big\{\ & \Bbf_{\lambda}\LB 2e_{\Scal}(\Q){+}d_{\Scal}(\Q){-}1\RB
   \\
    &-\Bbf_\lambda \LB\kl\LP e_{\Scal}(\Q),d_{\Scal}(\Q)\|e^{*\!},d^*\RP{-}\kappa(\Q)\RB\   \big\}.
\end{align*}
The global method is summarized in Algorithm~\ref{algo:e-d-min}.
\noindent As a side note, we  mention that the classic Danskin Theorem~\cite{Danskin1966} used in min-max optimization theory is not applicable in our case since our objective function is not differentiable for all $(e, d)\in[0, \tfrac{1}{2}]^2$. 
We discuss this point in Supplemental.

\section{Experimental Evaluation} 
\label{section:expe}

\begin{table}[t]
\centering
\caption{Comparison of the true risks ``$r^{\text{MV}}_{\Tcal}$'' and bound values ``Bnd'' obtained for each algorithm.
``Bnd'' is the value of the bound that is optimized, excepted for {\sc MinCq} and {\sc CB-Boost} for which we report the bound obtained with Theorem~\ref{theorem:new-C-Bound-e-d} instantiated with the majority vote learned. 
Results in {\bf bold} are the couple ($r^{\text{MV}}_{\Tcal}\!$,Bnd) associated to \textbf{the lowest risk} value.
{\it Italic} and \underline{underlined} results are the couple  ($r^{\text{MV}}_{\Tcal}\!$,Bnd) associated respectively to \textit{the lowest bound} value and \underline{the second lowest bound} values.
\label{table:expe}
}
\scalebox{0.82}{
\setlength{\tabcolsep}{1.25mm}
\begin{tabular}{@{\hspace{-2mm}}c@{} |c c| cc |cc || cc |cc || cc | cc }
\bottomrule[1pt]
& \multicolumn{2}{c|}{\bf\texttt{Alg.\ref{algo:mcallester}}} & \multicolumn{2}{c|}{\bf\texttt{Alg.\ref{algo:seeger}}} & \multicolumn{2}{c||}{\bf\texttt{Alg.\ref{algo:e-d-min}}}
& \multicolumn{2}{c|}{\scriptsize \sc CB-Boost}
& \multicolumn{2}{c||}{\footnotesize\sc MinCq}
& \multicolumn{2}{c|}{\scriptsize\sc Masegosa}
& \multicolumn{2}{c}{\algoGibbs}
\\[1mm]
& $r^{\text{MV}}_{\Tcal}$ & Bnd & $r^{\text{MV}}_{\Tcal} $ & Bnd & $r^{\text{MV}}_{\Tcal} $ & Bnd & $r^{\text{MV}}_{\Tcal} $ & Bnd & $r^{\text{MV}}_{\Tcal} $ & Bnd & $r^{\text{MV}}_{\Tcal} $ & Bnd & $r^{\text{MV}}_{\Tcal} $ & Bnd\\
\toprule[1pt]
\text{{\scriptsize letter:}AvsB}    &       .009 &      .323 &       .018 &      .114 &       \textbf{.000} &      \textbf{.085} &          {.000} &        {.104} &       .009 &      .451 &          \underline{.004} &         \underline{.070} &    \textit{.018} &   \textit{.056} \\
\text{{\scriptsize letter:}DvsO}    &       \textbf{.013} &      \textbf{.469} &       .018 &      .298 &       .018 &      .205 &          .022 &         .224 &       .022 &      .999 &          \underline{.018} &         \underline{.185} &    \textit{.044} &   \textit{.174}\\
\text{{\scriptsize letter:}OvsQ}    &       .017 &      .489 &       .017 &      .332 &       \textbf{.009} &      \textbf{.229} &          .017 &         .249 &       .039 &      1 &          \underline{.013} &         \underline{.210} &    \textit{.030} &   \textit{.201} \\
\text{credit}        &       .141 &      .912 &       .141 &      .874 &       \underline{.129} &      \underline{.816} &          .144 &         .855 &       \textbf{.126} &      \textbf{.929} &          .132 &         .869 &    \textit{.150} &   \textit{.651} \\
\text{glass}         &       .047 &      .904 &       .047 &      .832 &       \underline{.056} &      \underline{.798} &          \textbf{.037 }&         \textbf{.911} &       .056 &      .999 &          .056 &         .903 &    \textit{.047} &   \textit{.566} \\
\text{heart}         &       .250 &      .976 &       .264 &      .962 &       \underline{.250} &      \underline{.955} &          .270 &         .981 &       .270 &      1 &          \textbf{.243} &         \textbf{1.19} &    \textit{.250} &   \textit{.787} \\
\text{tictactoe}     &       .063 &      .815 &       .084 &      .750 &       \textbf{.056} &      \textbf{.610} &          .063 &         .649 &       .071 &     .782 &          \underline{.058} &         \underline{.580} &    \textit{.152} &   \textit{.511} \\
\text{usvotes}       &       .041 &      .741 &       .046 &      .584 &       .037 &      .508 &          .037 &         .590 &       .046 &      .985 &          \underline{\textbf{.032}} &         \underline{\textbf{.490}} &    \textit{.060} &   \textit{.342} \\
\text{wdbc}           &       .060 &      .725 &       .053 &      .603 &       .032 &      .523 &          \textbf{.025} &         \textbf{.591} &       .039 &      .992 &           \underline{.035} &          \underline{.513} &   \textit{ .063} &   \textit{.362} \\
\text{{\scriptsize mnist:}1vs7}     &       .006 &      .161 &       .005 &      .061 &        \underline{\textbf{.005}} &      \underline{\textbf{.038}} &          .005 &         .040 &       .015 &      .994 &          \textit{.006} &         \textit{.034} &    .006 &   .043 \\
\text{{\scriptsize mnist:}4vs9}     &       .017 &      .238 &       .016 &      .167 &       \underline{.016} &       \underline{.110} &          .016 &         .113 &       .046 &      .960 &         \textit{\textbf{.016}} &         \textit{\textbf{.106}} &    .063 &   .148 \\
\text{{\scriptsize mnist:}5vs6}     &       .011 &      .210 &       .011 &      .124 &        \underline{\textit{.011}} &       \underline{\textit{.078}} &          .011 &         .081 &       .035 &      .999 &          \textbf{.011} &         \textbf{.073} &    .036 &   .109 \\
\text{{\scriptsize fash:}COvsSH}\, &       \textbf{.108} &      \textbf{.462} &       .109 &      .433 &        \underline{.110} &       \underline{.366} &          .110 &         .371 &       .185 &      .894 &          \textit{.111} &         \textit{.358} &    .146 &   .409 \\
\text{{\scriptsize fash:}SAvsBO}\, &       .018 &      .217 &       .018 &      .134 &        \underline{.019} &       \underline{.094} &          .019 &         .097 &       .034 &      1 &          \textit{\textbf{.018}} &        \textit{\textbf{.087}} &    .020 &   .114 \\
\text{{\scriptsize fash:}TOvsPU}\, &       .029 &      .245 &       .029 &      .165 &       \textbf{.029} &      \textbf{.133} &          .030 &         .136 &       .045 &      .809 &           \underline{.030} &          \underline{.125} &    \textit{.051} &   \textit{.123} \\
\text{adult}         &       .163 &      .532 &       .163 &      .514 &       \underline{\textbf{.163}} &      \underline{\textbf{.492}} &          .163 &         .495 &       .204 &      1 &           \underline{\textbf{.163}} &          \underline{\textbf{.492}} &    \textit{.200} &   \textit{.413} \\
\midrule[1pt]\midrule[1pt]
{\tt Mean } & .062 & .526 & .065 & .434 & .059 & .378 & .061 & .405 & .078 & .925 & .059 & .393 & .083 & .313\\
\bottomrule[1pt]
\end{tabular}
}
\end{table}

\begin{figure}[t]
    \centering
    \begin{tikzpicture}[thick,scale=0.48, every node/.style={scale=0.48}]    
        \node[] at (0,0) {\includegraphics[scale=0.5]{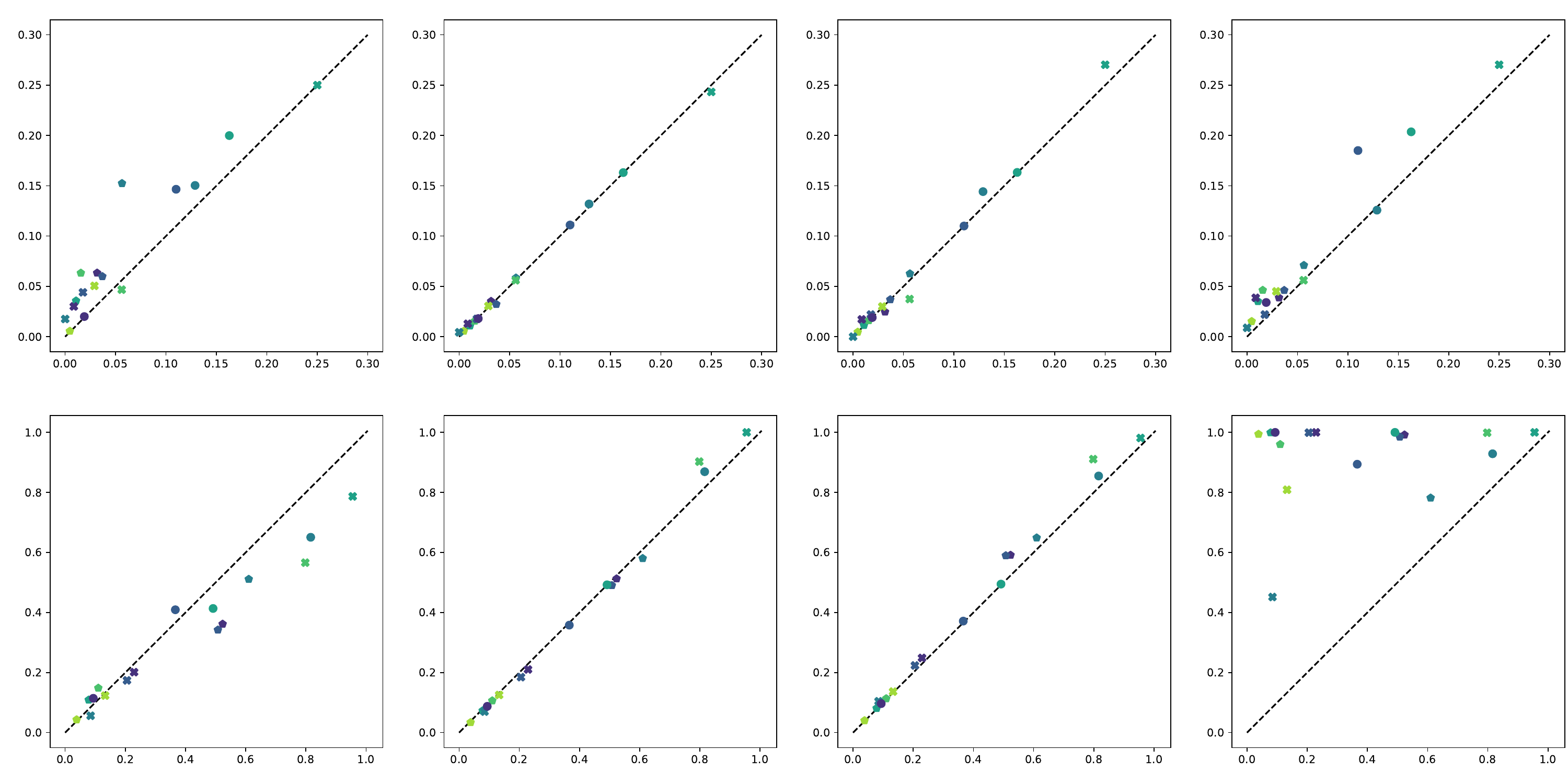}};
        \node[] at (0,7) {\Large \bf TEST RISKS COMPARISON};
        \node[] at (-9.2, 6.4) {\Large {\bf\texttt{Alg.\ref{algo:e-d-min}}} vs. \algoGibbsLarge};
        \node[] at (-2.8, 6.4) {\Large {\bf\texttt{Alg.\ref{algo:e-d-min}}} vs. {\sc Masegosa}};
        \node[] at (3.5 , 6.4) {\Large {\bf\texttt{Alg.\ref{algo:e-d-min}}} vs. {\sc CB-Boost}};
        \node[] at (10, 6.4) {\Large {\bf\texttt{Alg.\ref{algo:e-d-min}}} vs. {\sc MinCq}};
        \node[] at (0,0) {\Large \bf BOUND VALUES COMPARISON};
        \node[] at (-9.2, -6.4) {\Large {\bf\texttt{Alg.\ref{algo:e-d-min}}} vs. \algoGibbsLarge};
        \node[] at (-2.8, -6.4) {\Large {\bf\texttt{Alg.\ref{algo:e-d-min}}} vs. {\sc Masegosa}};
        \node[] at (3.5, -6.4) {\Large {\bf\texttt{Alg.\ref{algo:e-d-min}}} vs. {\sc CB-Boost}};
        \node[] at (10, -6.4) {\Large {\bf\texttt{Alg.\ref{algo:e-d-min}}} vs. {\sc MinCq}};
    \end{tikzpicture}
    \caption{Pairwise comparisons of the test risks (first line) and the bounds (second line) between Algorithm~\ref{algo:e-d-min} and the baseline algorithms.
    Algorithm~\ref{algo:e-d-min} is represented on the x-axis, while the y-axis is used for the other approaches. 
    Each dataset corresponds to a point in the plot and a point above the diagonal indicates that Algorithm~\ref{algo:e-d-min} is better.}
    \label{fig:bounds}
\end{figure}

\subsection{Empirical Setting} 
Our experiments\footnote{Experiments are done with PyTorch~\cite{Paszke2019} and CVXPY~\cite{DiamondBoyd2016}. The source code is \mbox{available} at \url{https://github.com/paulviallard/ECML21-PB-CBound}.} have a two-fold objective: \textit{(i)} assessing the guarantees given by the associated PAC-Bayesian bounds, and \textit{(ii)} comparing the performance of the different C-bound based algorithms in terms of risk optimization.
To achieve this objective, we compare the three algorithms proposed in this paper to the following state-of-the-art PAC-Bayesian methods for majority vote learning:
\begin{itemize}
    \item[\textbullet]  \algor{MinCq} \cite{RoyLavioletteMarchand2011} and \algor{CB-Boost} \cite{BauvinCapponiRoyLaviolette2020} that are based on the minimization of the empirical \mbox{C-Bound}. 
For comparison purposes and since {\sc MinCq} and {\sc CB-Boost} do not explicitly minimize a PAC-Bayesian bound, we report the bound values of Theorem~\ref{theorem:new-C-Bound-e-d} instantiated with the models learned;
\item[\textbullet] The algorithm proposed by Masegosa {\it et al.}~\cite{MasegosaLorenzenIgelSeldin2020} that optimizes a PAC-Bayesian bound on  $r^{\MV}_{\Dcal}(\Q) \le 4e_{\Dcal}(\Q)$ (see Theorem~9 of ~\cite{MasegosaLorenzenIgelSeldin2020});
\item[\textbullet] An algorithm\footnote{The algorithm \algoGibbs~is similar to Algorithm~\ref{algo:seeger}, but without the numerator of the C-Bound (\ie, the disagreement). More details are given in the Supplemental.}, denoted by \algoGibbs, to optimize a PAC-Bayesian bound based only on the Gibbs risk~\cite{LangfordShawe2002}: 
$r^{\text{MV}}_\Dcal(\Q) \le 2r_\Dcal(\Q)  \le 2\klmax(r_{\Scal}(\Q)|\psi_r(\Q)).$ 
\end{itemize}
We follow a general setting similar to the one of Masegosa {\it et al.}~\cite{MasegosaLorenzenIgelSeldin2020}.
The prior distribution $\P$ on $\Hcal$ is set as the uniform distribution, and the voters in $\Hcal$ are decision trees: $100$ trees are learned with $50\%$ of the training data (the remaining part serves to learn the posterior $\Q$).
More precisely, for each tree $\sqrt{d}$ features of the $d$-dimensional input space are selected, and the trees are learned by using the Gini criterion until the leaves are pure. 

\noindent In this experiment, we consider $16$ classic datasets\footnote{An overview of the datasets is presented in the Supplemental.} that we split into a train set~$\Scal$ and a test set $\Tcal$.
We report for each algorithm in Table~\ref{table:expe}, the test risks (on $\Tcal$) and the bound values (on $\Scal$, such that the bounds hold with prob. at least $95\%$).
The parameters of the algorithms are selected as follows.
\mbox{\bf\footnotesize 1) For}  Masegosa's algorithm we kept the default parameters~\cite{MasegosaLorenzenIgelSeldin2020}.
\mbox{\bf\footnotesize 2) For} all the other bounds minimization algorithms, we set $T{=}2,000$ iterations for all the datasets except for {adult}, {fash} and {mnist} where $T{=}200$.
We fix the objective functions with $\lambda{=}100$, and we use COCOB-Backprop optimizer~\cite{OrabonaTommasi2017} as {\sc update-$\Q$} (its parameter remains the default one).
For Algorithm~\ref{algo:e-d-min}, we fix the tolerance $\epsilon{=}.01$, {\it resp.} $\epsilon{=}10^{-9}$, to compute $\klmin$, {\it resp.} $\klmax$.
Furthermore, the maximal number of iterations $T_{\text{max}}$ in {\sc maximize-$e$-$d$} is set to $1,000$.
\mbox{\bf\footnotesize 3) For}  {\tt \algor{MinCq}}, we select the margin parameter among $20$ values uniformly distributed in $[0,\tfrac{1}{2}]$ by 3-fold cross validation. 
Since this algorithm is not scalable due to its high time complexity, we reduce the training set size to $m{{=}}400$ when learning with {\tt \algor{MinCq}} on the large datasets: {adult}, {fash} and {mnist} ({\tt \algor{MinCq}} is still competitive with less data on this datasets).
For {\tt \algor{CB-Bound}} which is based on a Boosting approach, we fix the maximal number of boosting iterations to $200$.

\subsection{Analysis of the Results}
Beforehand, we compare only our three self-bounding algorithms.
From Table~\ref{table:expe}, as expected we observe that Algorithm~\ref{algo:mcallester} based on the McAllester's bound (that is more interpretable but less tight) provides the worst bound.
Algorithm~\ref{algo:e-d-min} always provides tighter bounds than Algorithms~\ref{algo:mcallester} and~\ref{algo:seeger}, and except for {\scriptsize letter:}DvsO, {\scriptsize fash:}COvsSH, and  {\scriptsize fash:}SAvsBO Algorithm~\ref{algo:e-d-min} leads to the lowest test risks.
We believe that Algorithm~\ref{algo:e-d-min} based on the Lacasse's bound provides lower bounds than Algorithm~\ref{algo:seeger} based on the Seeger's bound because the Lacasse's approach bounds simultaneously the joint error and the disagreement.
Algorithm~\ref{algo:e-d-min} appears then to be the best algorithm among our three self-bounding algorithms that minimize a PAC-Bayesian C-Bound.\\

In the following we focus then on comparing our best contribution represented by Algorithm~\ref{algo:e-d-min} to the baselines; Figure~\ref{fig:bounds} summarizes this comparison.\\
\indent First, \algoGibbs~gives the lowest bounds among all the algorithms, but at the price of the largest risks.
This clearly illustrates the limitation of considering \textit{only} the Gibbs risk as an estimator of the majority vote risk: As discussed in Section~\ref{sec:gibbs}, the Gibbs risk is an unfair estimator since an increase of the diversity between the voters can have a negative impact on the Gibbs risk.\\
\indent Second, compared to Masegosa's approach, the results are comparable:
Algorithm~\ref{algo:e-d-min} tends to provide tighter bounds, and similar performances that lie in the same order of magnitude, as illustrated in Table~\ref{table:expe}. 
This behavior was expected since minimizing the bound of Masegosa~\cite{MasegosaLorenzenIgelSeldin2020} or the PAC-Bayesian C-Bound boils down to minimize a trade-off between the risk and the disagreement. \\
\indent Third, compared to empirical C-bound minimization algorithms, we see that Algorithm~\ref{algo:e-d-min} outputs better results than \algor{CB-Boost} and \algor{MinCq} for which the difference is significative and the bounds are close to $1$ (\ie, non-informative).
Optimizing the risk bounds tend then to provide better guarantees that justify that optimizing the empirical C-bound is often too optimistic.\\

\indent Overall, from these experiments, our Algorithm~\ref{algo:e-d-min} is the one that provides the best trade-off between having good performances in terms of risk optimization  and ensuring good theoretical guarantees with informative bounds.

\section{Conclusion and Future Work}
\label{section:conclusion}
In this paper, we present new learning algorithms driven by the minimization of PAC-Bayesian generalization bounds based on the C-Bound. 
More precisely, we propose to solve three optimization problems, each one derived from an existing PAC-Bayesian bound.
Our methods belong to the class of \emph{self-bounding} learning algorithms: The learned predictor comes with a tight and statistically valid risk upper bound. 
Our experimental evaluation has confirmed the quality of the learned predictor and the tightness of the bounds with respect to state-of-the-art methods minimizing the C-Bound.

As future work, we would like to study extensions of this work to provide meaningful bounds for learning (deep) neural networks. In particular, an interesting perspective would be to adapt the C-Bound to control the diversity and the weights in a neural network.

\section*{Acknowledgements}
This work was supported by the French Project {\sc apriori} {\small ANR-18-CE23-0015}.
Moreover, Pascal Germain is supported by the NSERC Discovery grant RGPIN-2020-07223 and the Canada CIFAR AI Chair Program.
The authors thank R\'emi Emonet for insightful discussions.

\nocite{*}

\newpage

\appendix
\begin{center}\large\bf
SUPPLEMENTAL OF\\[2mm]
Self-Bounding Majority Vote Learning Algorithms\\by the Direct Minimization\\of a Tight PAC-Bayesian C-Bound
\end{center}

\section{Section~\ref{section:expe}---Details on the Datasets}
\label{appendix:dataset}
Table~\ref{table:expe-overview} presents an overview of the datasets we use in our experiments (the split train/test, the dimensionality and the url to the dataset).

\begin{table}[th!]
\centering
\caption{Datasets overview.}
\scalebox{0.70}{
\begin{tabular}{c |c|c|c|c }
\bottomrule[1pt]

& $|\Scal|$ & $|\Tcal|$ & Dim. & Link\\
\toprule[1pt]
\texttt{letter:OvsQ} & 1303 & 233 & 16 &
\url{https://archive.ics.uci.edu/ml/datasets/letter+recognition}\\
\texttt{letter:DvsO} & 1331 & 227 & 16 &
\url{https://archive.ics.uci.edu/ml/datasets/letter+recognition}\\
\texttt{letter:AvsB} & 1327 & 228 & 16 &
\url{https://archive.ics.uci.edu/ml/datasets/letter+recognition}\\
\texttt{credit} & 327 & 326 & 46 &
\url{https://archive.ics.uci.edu/ml/datasets/Credit+Approval}\\
\texttt{heart} &  149 & 148 & 13 &
\url{https://archive.ics.uci.edu/ml/datasets/heart+disease}\\
\texttt{glass} &  107 & 107 & 9 &
\url{https://archive.ics.uci.edu/ml/datasets/glass+identification}\\
\texttt{tictactoe} & 479 & 479 & 9 &
\url{https://archive.ics.uci.edu/ml/datasets/Tic-Tac-Toe+Endgame}\\
\texttt{usvotes} & 218 & 217 & 48 &
\url{https://archive.ics.uci.edu/ml/datasets/congressional+voting+records}\\
\texttt{wdbc} & 285 & 284 & 30 &
\url{https://archive.ics.uci.edu/ml/datasets/Breast+Cancer+Wisconsin+(Diagnostic)}\\
\midrule[1pt]\midrule[1pt]
\texttt{adult} & 30162 & 15060 & 104 &
\url{https://archive.ics.uci.edu/ml/datasets/adult}\\
\texttt{mnist:1vs7} &  13007 & 2163 & 784 &
\url{http://yann.lecun.com/exdb/mnist}\\
\texttt{mnist:4vs9} &  11791 & 1991 & 784 &
\url{http://yann.lecun.com/exdb/mnist}\\
\texttt{mnist:5vs6} & 11339 & 1850 & 784 &
\url{http://yann.lecun.com/exdb/mnist}\\
\texttt{fash:TOvsPU} & 12000 & 2000 & 784 &
\url{https://github.com/zalandoresearch/fashion-mnist}\\
\texttt{fash:SAvsBO}  & 12000 & 2000 & 784 &
\url{https://github.com/zalandoresearch/fashion-mnist}\\
\texttt{fash:COvsSH} & 12000 & 2000 & 784 &
\url{https://github.com/zalandoresearch/fashion-mnist}\\
\bottomrule[1pt]
\end{tabular}
}
\label{table:expe-overview}
\end{table}

\section{Section~\ref{section:contribution-e-d}---About Danskin's Theorem}
\label{appendix:danskin}

As mentioned in the main paper, in the context of the justification of the function {\sc maximize-$e$-$d$} in Algorithm~\ref{algo:e-d-min}, we now discuss the possible  application of Danskin's Theorem \cite[Section I]{Danskin1966}. 
The statement of the theorem is as follows.

\begin{theorem}[Danskin's Theorem]
Let $\Acal\subset \R^m$ be a compact set and $\phi:\R^n\times \Acal\rightarrow \R$ s.t. for all $\abf\in\Acal$, we have that $\phi$ is continuously differentiable, then $\Phi(\xbf) = \max_{\abf\in\Acal}\phi(\xbf, \abf)$ is directionally differentiable with  directional derivatives
\begin{align*}
    \Phi'(\xbf, \dbf) = \max_{\abf\in\Acal^*}\LA\dbf, \nabla_{\xbf}\phi(\xbf, \abf)\RA,
\end{align*}
where $\Acal^*=\LC \abf^* \;\middle|\; \phi(\xbf, \abf^*)=\max_{\abf\in\Acal}\phi(\xbf, \abf)\RC$ and $\LA\cdot,\cdot\RA$ is the dot product.
\label{theorem:danskin}
\end{theorem}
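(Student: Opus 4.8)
The plan is to establish the two one-sided inequalities that together pin down the directional derivative $\Phi'(\xbf,\dbf) = \lim_{t\to 0^+}\tfrac1t[\Phi(\xbf+t\dbf)-\Phi(\xbf)]$: I will show that its $\liminf$ is at least, and its $\limsup$ at most, the quantity $\max_{\abf\in\Acal^*}\LA\dbf,\nabla_\xbf\phi(\xbf,\abf)\RA$, so that the limit exists and equals this maximum. Throughout I use that $\Acal$ is compact and that $\phi$ together with $\nabla_\xbf\phi$ is (jointly) continuous, which is the standard reading of the hypotheses; this guarantees that the maximum defining $\Phi$ is attained and that $\Acal^*$ is a nonempty closed subset of $\Acal$.

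For the lower bound, I would fix any maximizer $\abf^*\in\Acal^*$ and use the elementary estimate $\Phi(\xbf+t\dbf)\ge\phi(\xbf+t\dbf,\abf^*)$ together with the identity $\Phi(\xbf)=\phi(\xbf,\abf^*)$. Dividing the difference by $t>0$ and letting $t\to0^+$, the differentiability of $\phi(\cdot,\abf^*)$ yields $\liminf_{t\to0^+}\tfrac1t[\Phi(\xbf+t\dbf)-\Phi(\xbf)]\ge\LA\dbf,\nabla_\xbf\phi(\xbf,\abf^*)\RA$. Since $\abf^*\in\Acal^*$ was arbitrary, taking the maximum over $\Acal^*$ gives the lower bound. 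This direction is routine and uses only differentiability along the fixed maximizer.

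The harder direction is the upper bound on the $\limsup$, and here compactness of $\Acal$ is essential. I would take a sequence $t_k\to0^+$ realizing the $\limsup$ and, for each $k$, a maximizer $\abf_k\in\Acal$ with $\Phi(\xbf+t_k\dbf)=\phi(\xbf+t_k\dbf,\abf_k)$; by compactness I extract a subsequence with $\abf_k\to\bar\abf\in\Acal$. The first key step is to verify $\bar\abf\in\Acal^*$: passing to the limit in $\phi(\xbf+t_k\dbf,\abf_k)\ge\phi(\xbf+t_k\dbf,\abf)$, valid for every $\abf$, via continuity of $\phi$ gives $\phi(\xbf,\bar\abf)\ge\phi(\xbf,\abf)$ for all $\abf$. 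Then, exploiting $\Phi(\xbf)\ge\phi(\xbf,\abf_k)$ and a first-order mean-value expansion $\phi(\xbf+t_k\dbf,\abf_k)-\phi(\xbf,\abf_k)=t_k\LA\dbf,\nabla_\xbf\phi(\xbf+\theta_kt_k\dbf,\abf_k)\RA$ with $\theta_k\in(0,1)$, I bound the difference quotient by $\LA\dbf,\nabla_\xbf\phi(\xbf+\theta_kt_k\dbf,\abf_k)\RA$.

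The final step passes to the limit: as $k\to\infty$ both $\xbf+\theta_kt_k\dbf\to\xbf$ and $\abf_k\to\bar\abf$, so joint continuity of $\nabla_\xbf\phi$ sends the right-hand side to $\LA\dbf,\nabla_\xbf\phi(\xbf,\bar\abf)\RA\le\max_{\abf\in\Acal^*}\LA\dbf,\nabla_\xbf\phi(\xbf,\abf)\RA$, which is the desired upper bound. Combining the two inequalities shows the one-sided limit exists and equals the stated maximum. I expect the main obstacle to be precisely this upper-bound argument, where the hypotheses do the real work: without compactness the maximizers $\abf_k$ need not cluster, and without continuity of $\nabla_\xbf\phi$ one cannot identify the limit of the gradients evaluated along the moving maximizers with $\nabla_\xbf\phi(\xbf,\bar\abf)$.
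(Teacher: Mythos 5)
Your proposal is correct, but there is nothing in the paper to compare it against: the paper states Danskin's Theorem as an imported classical result (citing Danskin, 1966, Section~I) and never proves it --- the only proof given in the appendix is for the corollary of Madry {\it et al.} that builds on it. What you have written is, in substance, the standard textbook proof of the theorem (essentially Danskin's original argument, also found in Bertsekas's \emph{Nonlinear Programming}, Prop.~B.25): the lower bound on the $\liminf$ via any fixed maximizer $\abf^*\in\Acal^*$, and the upper bound on the $\limsup$ via maximizers $\abf_k$ at the perturbed points, a convergent subsequence $\abf_k\to\bar\abf$ extracted by compactness, the verification that $\bar\abf\in\Acal^*$, a mean-value expansion, and joint continuity of $\nabla_{\xbf}\phi$ to identify the limit $\LA\dbf,\nabla_{\xbf}\phi(\xbf,\bar\abf)\RA$. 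All steps check out. One point worth making explicit: as literally stated in the paper, the hypothesis only requires continuous differentiability of $\phi(\cdot,\abf)$ for each fixed $\abf$, and that alone does not suffice --- your upper-bound step genuinely needs $\phi$ and $\nabla_{\xbf}\phi$ to be \emph{jointly} continuous on $\R^n\times\Acal$, both to pass to the limit in $\phi(\xbf+t_k\dbf,\abf_k)\ge\phi(\xbf+t_k\dbf,\abf)$ and to conclude $\nabla_{\xbf}\phi(\xbf+\theta_k t_k\dbf,\abf_k)\to\nabla_{\xbf}\phi(\xbf,\bar\abf)$ along the moving maximizers. You flag this as ``the standard reading of the hypotheses,'' which is the right call (it is what Danskin actually assumes), but in a self-contained write-up you should state that strengthened hypothesis outright rather than leave it as an interpretation.
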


\noindent To optimize a problem $\min_{\xbf\in\R^n}\Phi(\xbf)$ with $\Phi(\xbf) = \max_{\abf\in\Acal}\phi(\xbf, \abf)$, this theorem tells us that under several assumptions, if we know a maximizer $\abf\in\Acal$, then, we have an analytical expression of the directional derivatives of $\Phi(\xbf)$. 
Thus, from this theorem, we also know a gradient to minimize the problem $\min_{\xbf\in\R^n}\Phi(\xbf)$. 

\begin{corollary}[Madry {et al.}~\cite{MadryMakelovSchmidtTsiprasVladu2018}] 
\label{cor:madry} Assuming that the conditions of Theorem~\ref{theorem:danskin} are fulfilled and let $\abf^*\in\Acal^{*}$ be a maximizer of $\phi$. 
If $\dbf=\nabla_{\xbf}\phi(\xbf, \abf^*)$ with $\LN\dbf\RN^2_2>0$ then $-\dbf$ is a descent direction for $\Phi(\xbf)$, \ie, $\Phi'(\xbf, \dbf) > 0$.
\end{corollary}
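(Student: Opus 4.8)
The plan is to apply Danskin's Theorem (Theorem~\ref{theorem:danskin}) and then lower-bound the resulting directional derivative by evaluating the maximum that defines it at the specific maximizer $\abf^*$. The argument is short because everything reduces to the observation that $\abf^*$ is itself a feasible point of the set $\Acal^*$ over which the directional derivative is a maximum, so the maximum is automatically at least the value attained there.

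First, I would invoke Theorem~\ref{theorem:danskin} with the direction $\dbf = \nabla_{\xbf}\phi(\xbf, \abf^*)$, which gives $\Phi'(\xbf, \dbf) = \max_{\abf\in\Acal^*}\LA\dbf, \nabla_{\xbf}\phi(\xbf, \abf)\RA$. Since $\abf^*\in\Acal^*$ by hypothesis, the maximum over $\Acal^*$ is at least the value attained at $\abf^*$ itself, so that $\Phi'(\xbf, \dbf) \geq \LA\dbf, \nabla_{\xbf}\phi(\xbf, \abf^*)\RA$. Substituting $\dbf = \nabla_{\xbf}\phi(\xbf, \abf^*)$ on the right-hand side turns this inner product into $\LA\dbf, \dbf\RA = \LN\dbf\RN_2^2$, which is strictly positive by the assumption $\LN\dbf\RN_2^2 > 0$. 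This chain yields $\Phi'(\xbf, \dbf) > 0$, which is exactly the claimed inequality.

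Finally, I would interpret this conclusion in terms of descent directions: a strictly positive directional derivative of $\Phi$ along $\dbf$ means $\Phi$ strictly increases when moving along $\dbf$, hence $-\dbf$ strictly decreases $\Phi$ and is a descent direction. There is essentially no genuine obstacle in this proof; the only points requiring care are the sign convention linking the ascent direction $\dbf$ (for which $\Phi'(\xbf,\dbf)>0$) to the descent direction $-\dbf$, and the requirement that the chosen maximizer $\abf^*$ genuinely lie in $\Acal^*$, which is precisely what the corollary's hypothesis guarantees and what allows the lower bound in the second step.
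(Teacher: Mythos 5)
Your proof is correct and follows exactly the paper's argument: apply Danskin's Theorem to get $\Phi'(\xbf,\dbf)=\max_{\abf\in\Acal^*}\LA\dbf,\nabla_{\xbf}\phi(\xbf,\abf)\RA$, then lower-bound the maximum by its value at the feasible point $\abf^*$, which equals $\LN\dbf\RN_2^2>0$. No differences worth noting.
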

\begin{proof} By definition of the directional derivative $\Phi'(\xbf, \dbf)$ and the direction $\dbf$, we have:
\begin{align*}
        \Phi'(\xbf, \dbf) &= \max_{\abf\in\Acal^*}\LA\dbf,\nabla_{\xbf}\phi(\xbf, \abf)\RA\\
        &= \max_{\abf\in\Acal^*}\LA\nabla_{\xbf}\phi(\xbf, \abf^*),\nabla_{\xbf}\phi(\xbf, \abf)\RA \ge \LN\nabla_{\xbf}\phi(\xbf, \abf^*)\RN_{2}^{2}> 0.
    \end{align*}
    \end{proof}
\noindent Then, for each iteration of the min/max problem optimization, we can {\it (i)} optimize the inner maximization problem, {\it (ii)} fix the maximizer $\abf^*\in\Acal$ and apply a gradient descent step with the derivative $\nabla_{\xbf}\phi(\xbf, \abf^*)$.
However, as we mentioned in the main paper, the assumptions are not fulfilled in our case to apply Theorem~\ref{theorem:danskin} since our inner objective in Equation~\eqref{eq:k_Q_param_nu} or its approximation
\centerline{$\displaystyle\CBound^{\tt L}(e,d){-}\Bbf\!\LB d {-} 2\sqrt{\min\LP e, \tfrac{1}{4}\RP}{-}2e\RB\!{-}\Bbf\!\LB d{-}\tfrac{1}{2}\RB\!{-}\Bbf_{\lambda}\Big[\kl\LP e_{\Scal}(\Q),  d_{\Scal}(\Q)\|e, d\RP{-}\kappa(\Q)\Big]$}
is not differentiable everywhere in the compact set $[0, \tfrac{1}{2}]^2$.
However, we never encounter problematic cases and this strategy is thus valid for optimizing our proposed approximation.
In practice, we have found that it is indeed an efficient and sound strategy.

\section{Section~\ref{section:expe}---About Optimizing $2\klmax \LP r_{\Scal}(\Q)| \psi_{r}(\Q)\RP$}
\label{appendix:algo-2-gibbs}
To minimize the bound $2(\klmax \LP r_{\Scal}(\Q) \;\middle|\; \psi_{r}(\Q)\RP)$, we adopt the algorithm (denoted as \algoGibbs\ in the setting description of the experiments of Section~\ref{section:expe}) similar to Algorithm~\ref{algo:seeger}.
Indeed, we use instead the objective function:
\begin{align}
    \min_{\Q} 2(\klmax \LP r_{\Scal}(\Q) \;\middle|\; \psi_{r}(\Q)\RP).\label{eq:min-2r}
\end{align}
The algorithm is described in Algorithm~\ref{algo:2r} below. %as follows.
\begin{algorithm}[h!]
 \caption{Minimization of Equation \eqref{eq:min-2r} by GD}
  \begin{algorithmic}
  \State{{\bf Given: } learning sample $\Scal$,  prior distribution $\P$ on $\Hcal$, update function \textsc{update-$\Q$}}
    \State{{\bf Hyperparameters: }
    number of iterations $T$}
    \Function{minimize-$\Q$}{}
    \State{$\Q \leftarrow \P$}
    \For{$t\leftarrow 1$ to $T$}
        \State{Compute $\klmax \LP r_{\Scal}(\Q) \;\middle|\; \psi_{r}(\Q)\RP$ using {\sc compute-$\klmax(q|\psi)$}}.
        \State{$\Q\leftarrow$\textsc{update-$\Q$}$(\klmax \LP r_{\Scal}(\Q) \;\middle|\; \psi_{r}(\Q)\RP)$ (thanks to  Equation~\eqref{eq:deriv-kl})}
    \EndFor\\
    \Return{$\Q$}
    \EndFunction
  \end{algorithmic}
  \label{algo:2r}
\end{algorithm}

\section{Section~\ref{section:expe}---Additional Experiments}
\label{appendix:expe}
We report in Figure~\ref{figure:expe2} and Figure~\ref{figure:expe3}, the empirical joint error and disagreement obtained on the different datasets.
As for Table~\ref{table:expe} and Figure~\ref{fig:bounds}, this illustrates that the solutions found by \texttt{Alg.3}, {\sc Masegosa} and {\sc CB-Boost} are similar while {\sc MinCq} and \algoGibbs~provide very different solutions.

\begin{figure}[H]
\centering
\begin{tikzpicture}[thick,scale=0.43, every node/.style={scale=0.43}]
\node[inner sep=0pt] at (-2.0,0.0) {\includegraphics[scale=1.0]{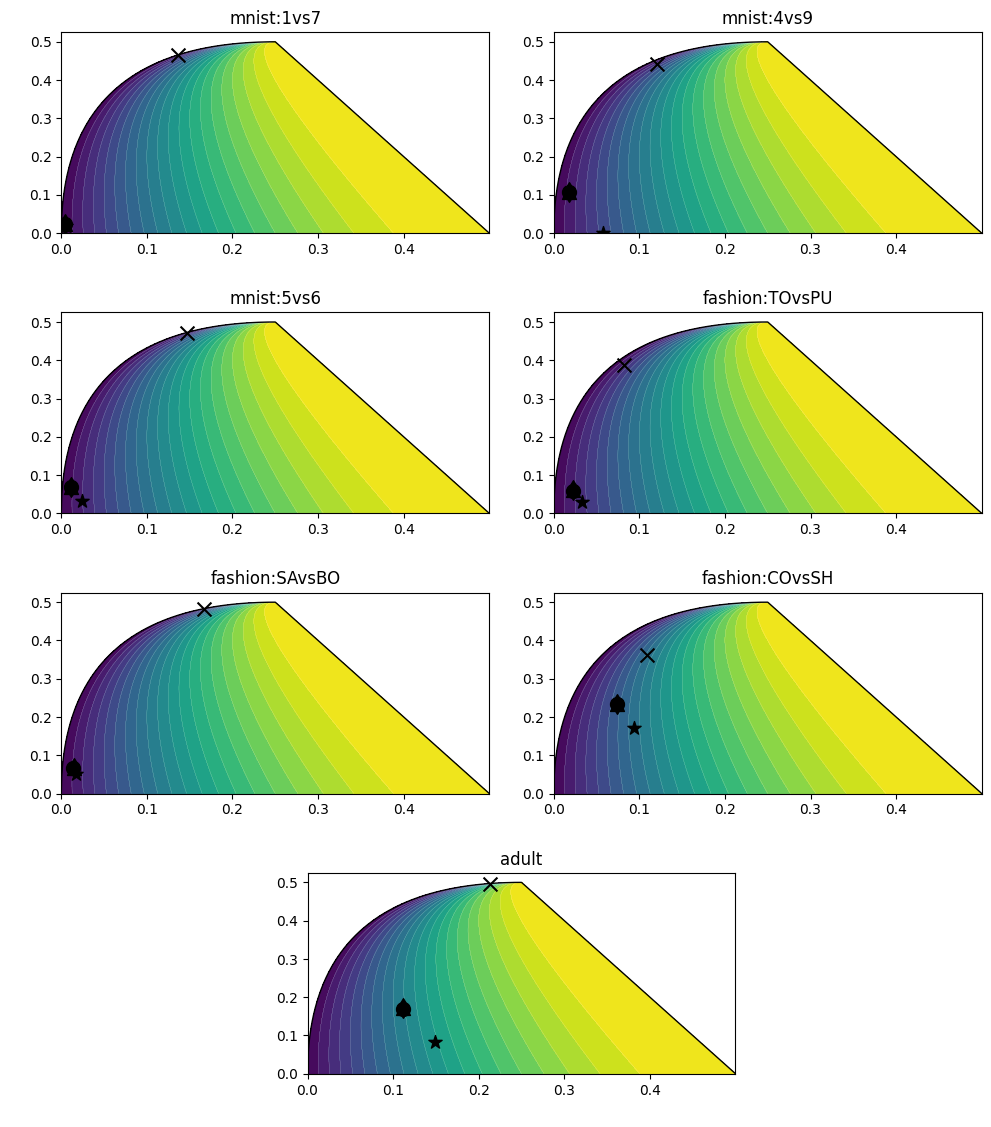}};
\node[inner sep=0pt] at (-2.0,-17.0) {\includegraphics[scale=1.0]{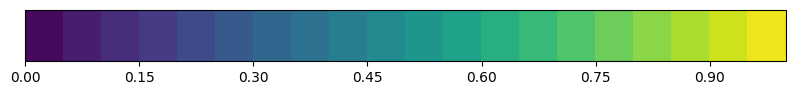}};
\node[] at (9.2,-16.7) {\LARGE$\boldsymbol{C_{\Scal}(\Q)}$};
\end{tikzpicture}
\caption{Representation of all the possible values of the empirical C-Bound $C_{\Scal}(\Q)$ in function of the disagreement $d_{\Scal}(\Q)$ (y-axis) and joint error $e_{\Scal}(\Q)$ (x-axis).
We report the values obtained on different datasets by \texttt{Alg.3} ($\blacklozenge$), {\sc Masegosa} ($\blacktriangle$), \algoGibbs~($\bigstar$), \texttt{CB-Boost} ($\bullet$), and \mbox{\texttt{MinCq} ($\boldsymbol{\times}$)}.}
\label{figure:expe2}
\end{figure}

\begin{figure}[H]
\centering
\begin{tikzpicture}[thick,scale=0.5, every node/.style={scale=0.5}]
\node[inner sep=0pt] at (-2.0,0.0) {\includegraphics[scale=1.0]{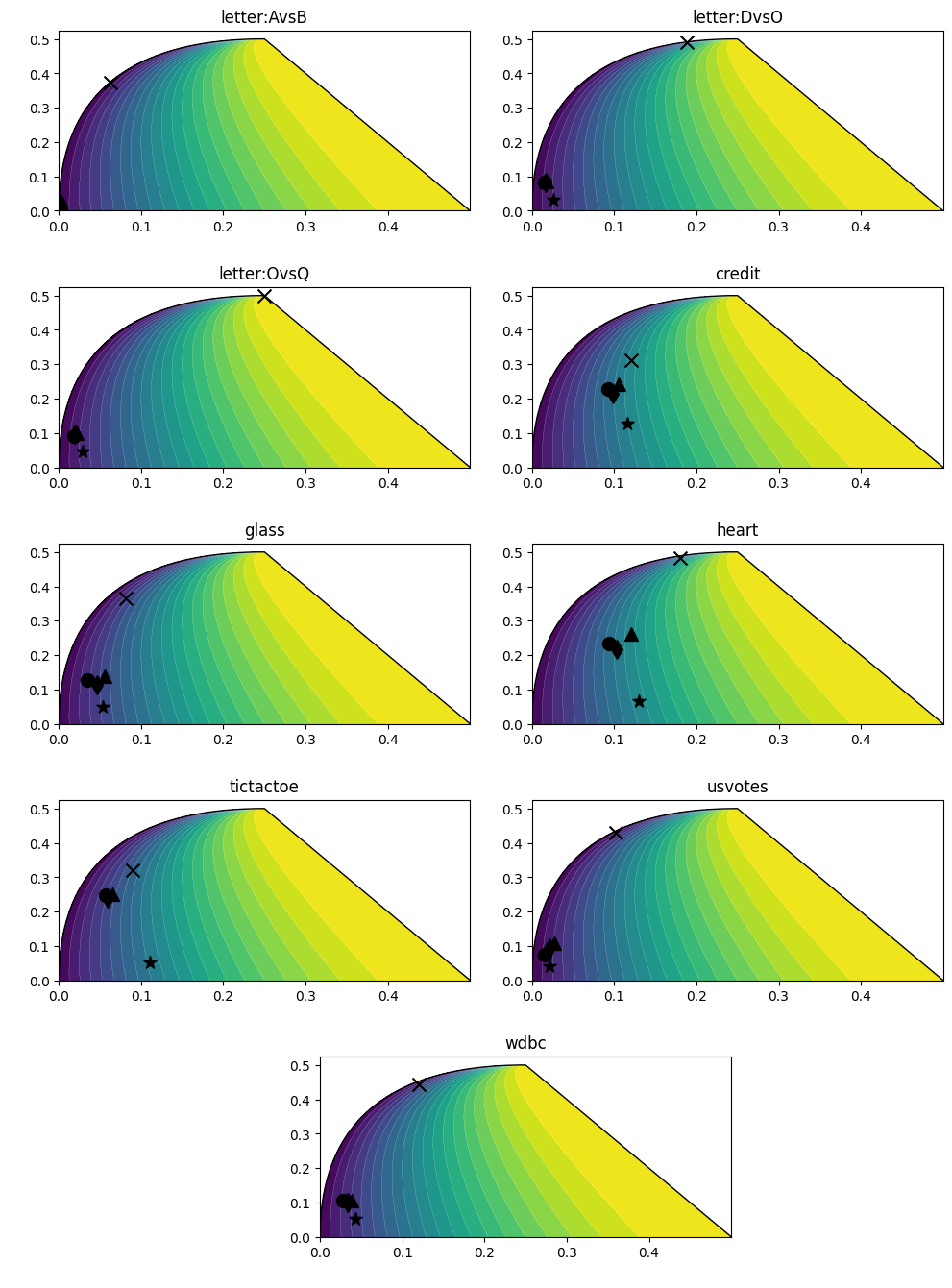}};
\node[inner sep=0pt] at (-2.0,-19.0) {\includegraphics[scale=1.0]{cbound_legend.png}};
\node[] at (9.2,-18.7) {\LARGE$\boldsymbol{C_{\Scal}(\Q)}$};
\end{tikzpicture}
\caption{Representation of all the possible values of the empirical C-Bound $C_{\Scal}(\Q)$ in function of the disagreement $d_{\Scal}(\Q)$ (y-axis) and joint error $e_{\Scal}(\Q)$ (x-axis).
We report the values obtained on different datasets by \texttt{Alg.3} ($\blacklozenge$), {\sc Masegosa} ($\blacktriangle$), \algoGibbs~($\bigstar$), \texttt{CB-Boost} ($\bullet$), and \mbox{\texttt{MinCq} ($\boldsymbol{\times}$)}.}
\label{figure:expe3}
\end{figure}
\end{document}